\theoremstyle{plain}
\newtheorem{theorem}{Theorem}[section]
\newtheorem{claim}{Claim}[section]
\newtheorem{lemma}[theorem]{Lemma}
\newtheorem{corollary}[theorem]{Corollary}
\theoremstyle{definition}
\newtheorem{definition}[theorem]{Definition}
\theoremstyle{remark}
\DeclareMathOperator*{\argmin}{arg\,min}
\DeclareMathOperator*{\argmax}{arg\,max}
\DeclareMathOperator{\Tr}{Tr}
\newcommand{\Ber}{\operatorname{Ber}}
\newcommand{\BigO}{\mathcal O}
\newcommand{\D}{\mathcal D}
\newcommand{\E}{\mathbb E}
\newcommand{\K}{\mathcal K}
\newcommand{\Pe}{\mathbb P}
\newcommand{\R}{\mathbb R}
\newcommand{\St}{\mathbb S}
\newcommand{\X}{\mathcal X}
\newcommand{\Z}{\mathbb Z}
\let\svsqrt\sqrt
\newsavebox\Nsqrt
\def\sr#1{\ThisStyle{%
		\savebox\Nsqrt{\scalebox{.5}[1]{$\SavedStyle\svsqrt{\phantom{\cramped{#1#1}}}$}}%
		\ooalign{\usebox{\Nsqrt}\cr\kern.2pt\usebox{\Nsqrt}\cr\hfil$\SavedStyle\cramped{#1}$}}}
\def\*#1{\mathbf{#1}}
\title{Learning Predictions for Algorithms with Predictions
}
\author{%
  Mikhail Khodak \\
  Carnegie Mellon University \\
  \texttt{khodak@cmu.edu}
  \And
  Maria-Florina Balcan \\
  Carnegie Mellon University \\
  \texttt{ninamf@cs.cmu.edu}
  \AND
  Ameet Talwalkar \\
  Carnegie Mellon University \\
  \texttt{talwalkar@cmu.edu}
  \And
  Sergei Vassilvitskii \\
  Google Research - New York \\
  \texttt{sergeiv@google.com}
}
\begin{document}

\newcounter{col}
\setcounter{col}{1}

\maketitle

\begin{abstract}
  A burgeoning paradigm in algorithm design is the field of {\em algorithms with predictions}, in which algorithms can take advantage of a possibly-imperfect prediction of some aspect of the problem.
  While much work has focused on using predictions to improve competitive ratios, running times, or other performance measures, less effort has been devoted to the question of how to obtain the predictions themselves, especially in the critical online setting.
  We introduce a general design approach for algorithms that learn predictors:
  (1)~identify a functional dependence of the performance measure on the prediction quality and (2)~apply techniques from online learning to learn predictors, tune robustness-consistency trade-offs, and bound the sample complexity.
  We demonstrate the effectiveness of our approach by applying it to bipartite matching, ski-rental, page migration, and job scheduling.
  In several settings we improve upon multiple existing results while utilizing a much simpler analysis, while in the others we provide the first learning-theoretic guarantees.\looseness-1
\end{abstract}

%
%


\vspace{-2mm}
\section{Introduction}\label{sec:intro}
\vspace{-2mm}

Algorithms with predictions, a subfield of beyond-worst-case analysis of algorithms \citep{mitzenmacher2021awp}, aims to design methods that make use of machine-learned predictions in order to reduce runtime, error, or some other performance cost.
Mathematically, for some prediction $\*x$, algorithms in this field are designed such that their cost $C_t(\*x)$ on an instance $t$ is upper-bounded by some measure $U_t(\*x)$ of the quality of the prediction on that instance.
The canonical example here is that the cost of binary search on a sorted array of size $n$ can be improved from $\BigO(\log n)$ to $\le U_t(\*x)=2\log\eta_t(\*x)$, where $\eta_t(\*x)$ is the distance between the true location of a query $t$ in the array and the location predicted by the predictor $\*x$ \citep{mitzenmacher2021awp}.
In recent years, algorithms whose cost depends on the quality of possibly imperfect predictions have been developed for numerous important problems, including caching \citep{rohatgi2020nearoptimal,jiang2020online,lykouris2021competitive}, scheduling \citep{lattanzi2020scheduling,scully2022uniform}, ski-rental \cite{kumar2018improving,anand2020customizing,diakonilakis2021learning}, bipartite matching \citep{dinitz2021duals}, page migration \citep{indyk2020online}, and many more \cite{bamas2020primal,du2021putting,mitzenmacher2021awp}.\looseness-1

While there has been a significant effort to develop algorithms that use earned predictions, until very recently~\cite{chen2022faster,lindermayr2022permutation} there has been less focus on actually {\em learning} to predict.
For example, of the works listed only two on ski-rental \citep{anand2020customizing,diakonilakis2021learning} and one other \citep{dinitz2021duals} show sample complexity guarantees, and none consider the important {\em online learning} setting, in which problem instances may not come from a fixed distribution.
This is in contrast to the related area of data-driven algorithm design \citep{gupta2017pac,balcan2021data}, which has established techniques such as dispersion~\citep{balcan2018dispersion} and others~\cite{balcan2021dual,bartlett2022gj} for deriving learning-theoretic guarantees, leading to end-to-end results encompassing both learning and computation.
It is also despite the fact that, as we see in this work, learning even simple predictors is in many cases a non-trivial problem.\looseness-1

We bridge this gap and provide a framework for obtaining learning-theoretic guarantees for algorithms with predictions. In addition to improving sample complexity bounds, we show how to learn the parameters of interest in an setting with low overall regret. 
We accomplish this using a two-step approach inspired by recent work on theoretical meta-learning \cite{khodak2019adaptive}, which has been used to derive numerous multi-task learning results by optimizing regret-upper-bounds that encode the task-similarity \citep{li2020dp,lin2021accelerating,balcan2021ltl,khodak2021fedex}.
As evidenced by our results in Table~\ref{tab:summary}, we believe the following two-step framework below holds similar potential for obtaining guarantees for algorithms with predictions:

\begin{table*}[!t]
	\caption{
		Settings we apply our framework to, new learning algorithms we derive, and their regret.\looseness-1
		\vspace{-2mm}
	}
	\label{tab:summary}
	\centering
	\begin{threeparttable}
		\tiny
		\begin{tabular}{l@{\hspace{10pt}}l@{\hspace{10pt}}l@{\hspace{10pt}}l@{\hspace{7pt}}ll}
			\toprule
			Problem & Algorithm with prediction & Feedback & Upper bound (losses) & Learning algo. & Regret \\
			\midrule
			Min. \hspace{-1pt}weight bipartite & Hungarian method & Opt. dual & \multirow{2}{*}{$\BigO\left(\|\*{\hat x}-\*x^*(\*c)\|_1\right)$} & Proj. online & \multirow{2}{*}{$\BigO\left(n\sr T\right)$} \\
			matching \eqref{sec:matching}$^{\ast,\dagger}$ & initialized by dual $\*{\hat x}\in\R^n$ & $\*x^*(\*c)$ && gradient \\
			\midrule
			Online page  & Lazy offline optimal for & Requests & \multirow{2}{*}{\scalebox{0.8}{$\tilde\BigO\left(\max\limits_{i\in[n]}\E_{\*p}\sum\limits_{j=i}^{i+\gamma D}1_{\hat s_{[j]}\ne s_{[j]}}\right)$}} & Exponentiated & \multirow{2}{*}{$\BigO\left(n\sr T\right)$} \\
			migration \eqref{sec:migration}$^\ast$ & predictions $\{\hspace{-.8pt}\hat s_{[j]}\hspace{-2pt}\sim\hspace{-2pt}\*p_{[j]}\hspace{-.8pt}\}_{j=1}^n$ & $\{\hspace{-.8pt}s_{[j]}\hspace{-.8pt}\}_{j=1}^n$ && gradient $\times n$ \\
			\midrule
			Online job & Corrected offline optimal & Opt. weights & \multirow{2}{*}{$\BigO\left(\|\*{\hat x}-\log\*w\|_\infty\right)$} & Euclidean &  \multirow{2}{*}{$\BigO\left(\hspace{-1pt}\sr{mT\log(mT)}\right)$} \\
			scheduling \eqref{sec:sched}$^\ast$ & for predicted logits $\*{\hat x}\in\R^m$ & $\*w\in\triangle^m$ && KT-OCO \\
			\midrule
			Non-clairvoyant & Preferential round-robin & Prediction & \multirow{2}{*}{$\min\left\{\frac{1+2\eta/n}{1-\lambda},\frac2\lambda\right\}$} & Exponential & \multirow{2}{*}{$\BigO\left(\sr{T\log T}\right)$} \\
			job scheduling \eqref{sec:rental}$^\ddagger$ & with trade-off parameter $\lambda$ & quality $\eta$ && forecaster \\
			\midrule
			Ski-rental w. integer & Buy if price $b\le x$, $\lambda$ trade- & Number of & \multirow{2}{*}{\scalebox{0.95}{$\frac{\min\{\lambda(b1_{x>b}+n1_{x\le b}),b,n\}}{1-(1+1/b)^{-b\lambda}}$}} & Exponentiated & \multirow{2}{*}{$\BigO\left(\hspace{-1pt}N\sr{T\log(NT)}\right)$} \\
			days $n\in[N]$ \eqref{sec:rental} & off with worst-case approx. & ski-days $n$ && gradient \\
			\midrule
			Ski-rental with & Buy after $x$ days, $\lambda$ trade- & Number of & \scalebox{0.9}{$\min\big\{\frac{e\min\{n,b\}}{(e-1)\lambda},$} & Exponential & $\BigO\big(\sr{T\log(NT)}$ \\
			$\beta$-dispersed $n$ \eqref{sec:rental} & off with worst-case approx. & ski-days $n$ & \scalebox{0.9}{$\qquad~\frac{n1_{n\le x}+(b+x)1_{n>x}}{1-\lambda}\big\}$} & forecaster & $\quad\enskip+N^2T^{1-\beta}\big)$ \\
			\bottomrule
		\end{tabular}
		\begin{tablenotes}
			\item[$\ast$] For these problems we also provide new guarantees in the statistical (i.i.d.) setting and for learning linear predictors that take instance features as their inputs.
			\item[$\dagger$] We also obtain results for its extensions to minimum-weight $\*b$-matching and other graph algorithms with predictions in Appendices~\ref{sec:bmatching} and~\ref{sec:graph}.
			\item[$\ddagger$] We also provide new guarantees for the problem of learning job permutations in the non-clairvoyant setting in Appendix~\ref{sec:permutation}.
		\end{tablenotes}
	\end{threeparttable}
\vspace{-6mm}
\end{table*}

\vspace{-1mm}
\begin{enumerate}[leftmargin=*,topsep=-1pt,noitemsep]\setlength\itemsep{2pt}
	\item For a given algorithm, derive a convenient-to-optimize upper bound $U_t(\*x)$ on the cost $C_t(\*x)$ that depends on both the prediction $\*x$ and information specific to instance $t$ returned once the algorithm terminates, e.g. the optimum in combinatorial optimization.
	We find that in many cases such bounds already exist, and the quality of the prediction can be measured by a distance from some ground truth obtained from the output, a quantity that is usually convex and thus learnable.
	\item Apply online learning to obtain both regret guarantees against adversarial sequences and sample complexity bounds for i.i.d. instances.
	We provide pseudo-code for a generic setup in Algorithm~\ref{alg:generic}.
\end{enumerate}

\vspace{-1mm}
Table \ref{tab:summary} summarizes instantiations of our framework on multiple problems. 
Our approach is designed to be simple-to-execute, leaving much of the difficulty to what the field is already good at:
designing algorithms and proving prediction-quality-dependent upper bounds on their costs.
Once the latter is accomplished, our framework leverages problem-specific structure to design a customized learning algorithm for each problem, leading to strong regret and sample complexity guarantees.
In particular, in multiple settings we improve upon existing results in either sample complexity or generality, and in all cases we are the first to show regret guarantees in the online setting.
This demonstrates the usefulness of and need for such a theoretical framework for studying these problems.

\vspace{-1mm}
We summarize the diverse set of contributions enabled by our theoretical framework below:\vspace{-1mm}
\begin{enumerate}[leftmargin=*,topsep=-1pt,noitemsep]\setlength\itemsep{2pt}
	\item {\bf Bipartite matching:}
	Our starting example builds upon the work on {\bf minimum-weight bipartite matching} using the Hungarian algorithm by \citet{dinitz2021duals}.
	We show how our framework leads directly to both the first regret guarantees in the online setting and new sample complexity bounds that improve over the previous approach by a factor linear in the number of nodes.
	In the Appendix we show similar strong improvements for $\*b$-matching and other graph algorithms.
	\item {\bf Page migration:}
	We next study a more challenging application, {\bf online page migration}, and show how we can adapt the algorithmic guarantee of \citet{indyk2020online} into a learnable upper bound for which we can again provide both adversarial and statistical guarantees.
	\item {\bf Learning linear maps with instance-feature inputs:}
	Rather than assume the existence of a strong fixed prediction, it is often more natural to assume each instance comes with features that can be input into a predictor such as a linear map.
	Our approach yields the first guarantees for learning linear predictors for algorithms with predictions, which we obtain for the two problem settings above and also for {\bf online job scheduling} using makespan minimization~\cite{lattanzi2020scheduling}.
	\item {\bf Tuning robustness-consistency trade-offs:}
	Many bounds for {\em online} algorithms with predictions incorporate parameterized trade-offs between trusting the prediction or falling back on a worst-case approximation.
	This suggests the usefulness of tuning the trade-off parameter, which we instantiate on a simple job scheduling problem with a fixed predictor.
	Then we turn to the more challenging problem of simultaneously tuning the trade-off and learning predictions, which we achieve on two variants of the {\bf ski-rental} problem.
	For the discrete case we give the only learning-theoretic guarantee, while for the continuous case our bound uses a dispersion assumption \citep{balcan2018dispersion} that, in the i.i.d. setting, is a strictly weaker assumption than the log-concave requirement of \citet{diakonilakis2021learning}.\looseness-1
\end{enumerate}

\vspace{-2mm}
\section{Related work}\label{sec:related}
\vspace{-2mm}

Algorithms with predictions is a type of beyond-worst-case analysis of algorithms~\citep{roughgarden2020beyond};
along with areas like smooth analysis~\citep{spielman2004smoothed} and data-driven algorithm design~\citep{balcan2021data}, it takes advantage of the fact that real-world instances are not worst-case.
Inspired by success in applications such as learned indices~\citep{kraska2018case}, there has been a great deal of theoretical study focusing on algorithms whose guarantees depend on the quality of a given predictor (c.f. the \citet{mitzenmacher2021awp} survey).
The actual learning of this predictor has been studied less \citep{anand2020customizing,diakonilakis2021learning,dinitz2021duals} and rarely in the online setting; 
we aim to change this with our study.
Some papers improve online learning itself using predictions~\citep{rakhlin2013online,jadbabaie2015dynamic,dekel2017online}, but they also assume known predictors or only learn over a small set of policies, and their goal is minimizing regret not computation.
In-general, we focus on showing how algorithms with predictions can make use of online learning rather than on new methods for the latter.
Several works \citep{balcan2007approx,rakhlin2017efficient,anand2021regression} use learning {\em while} advising an algorithm, in-effect taking a learning-inspired approach to better make use of a prediction {\em within} an algorithm, whereas we focus on learning the prediction {\em outside} of the target algorithm.
Our paper presents the first general framework for efficiently learning useful predictors.\looseness-1

Data-driven algorithm design is a related area that has seen more learning-theoretic effort~\citep{gupta2017pac,balcan2018dispersion,balcan2021data}.
At a high-level, it often studies tuning parameters such as the gradient descent step-size~\citep{gupta2017pac} or settings of branch and bound~\cite{balcan2018learning}, whereas the predictors in algorithms with predictions guess the sequence in an online algorithm \citep{indyk2020online} or the actual outcome of the computation \cite{dinitz2021duals}.
The distinction can be viewed as terminological, since a prediction can be viewed as a parameter, but it can mean that in our settings we have full information about the loss function since it is typically some discrepancy between the full sequence or computational outcome and the prediction.
In contrast, in data-driven algorithm design getting the cost of each parameter often requires additional computation, leading to (semi-)bandit settings~\citep{balcan2020semibandit}.
A more salient difference is that data-driven algorithm design guarantees compete with the parameter that minimizes average cost but do not always quantify the improvement attainable via learning; 
in algorithms with predictions we do generally quantify this improvement with an upper bound on the cost that depends on the prediction quality, but we usually only compete with the parameter that is optimal for prediction quality, which is not always cost-optimal.
We do adapt data-driven algorithm design tools like dispersion~\citep{balcan2018dispersion} for algorithms with predictions.

Our two-step approach to providing guarantees for algorithms with predictions is inspired by the Average Regret-Upper-Bound Analysis (ARUBA) framework~\citep{khodak2019adaptive} for studying meta-learning~\citep{finn2017maml}.
Instead of instances they have tasks with different data-points, and the upper bounds are on learning-theoretic quantities such as regret rather than computational costs such as runtime.
Mathematically, ARUBA takes advantage of similar structure in the regret-upper-bounds that we find in algorithms with predictions, namely that the upper bounds encode some measure of the quality of the prediction (in their case an initialization for gradient descent) via a comparison to the ground-truth (in their case the optimal parameter).
However, whereas in ARUBA the need to know this optimal parameter after seeing a task is a weakness that does not hold in practice, in algorithms with predictions the corresponding quantity---the feedback listed in Table~\ref{tab:summary}---is generally known after seeing the instance.

\vspace{-2mm}
\section{Framework overview and application to bipartite matching}\label{sec:matching}
\vspace{-2mm}

In this section we outline the theoretical framework for designing algorithms and proving guarantees for learned predictors.
As an illustrative example we will use the Hungarian algorithm for bipartite matching, for which \citet{dinitz2021duals} demonstrated an instance-dependent upper bound on the running time using a learned dual vector. Along the way, we will show an improvement to their sample complexity bound together with the first online results for this setting.

{\bf Bipartite matching:} 
For a bipartite graph on $n$ nodes and $m$ edges, {\bf min-weight perfect matching} (MWPM) asks for the perfect matching with the least weight according to edge-costs $\*c\in\Z_{\ge0}^m$.
A common approach here is the Hungarian algorithm, a convex optimization-based approach for which \citet{dinitz2021duals} showed a runtime bound of $\tilde\BigO\left(m\sqrt n\min\left\{\|\*x-\*x^*(\*c)\|_1,\sqrt n\right\}\right)$, where $\*x\in\Z^n$ initializes the duals in a primal-dual algorithm and $\*x^*(\*c)\in\Z^n$ is dual of the optimal solution;
note that the latter is obtained for free after running the Hungarian method.

{\bf Step 1 - Upper bound:} 
The first step of our approach is to find a suitable function $U_t(\*x)$ of the prediction $\*x$ that (a) upper bounds the target algorithm's cost $C_t(\*x)$, (b) can be constructed completely once the algorithm terminates, and (c) can be efficiently optimized.
These qualities allow learning the predictor in the second step.
The requirements are similar to those of ARUBA for showing results for meta-learning~\citep{khodak2019adaptive}, although there the quantity being upper-bounded was regret, not algorithmic cost.

\begin{minipage}{0.55\linewidth}
	\begin{algorithm}[H]
		\DontPrintSemicolon
		initialize $\*x_1\in\X$ for $\operatorname{OnlineAlgorithm}$\\
		\For{instance $t=1,\dots,T$}{
			obtain instance $I_t$\\
			run $\operatorname{AlgorithmWithPrediction}(I_t,\*x_t)$\\
			\qquad suffer cost $C_t(\*x_t)\le U_t(\*x_t)$ \\
			\qquad get feedback to construct upper bound $U_t$ \\
			$\*x_{t+1}\gets\operatorname{OnlineAlgorithm}(\{U_t\}_{s=1}^t,\*x_1)$ \\
		}
		\caption{\label{alg:generic}
			Generic application of an online learning algorithm over $\X$ to learn a predictor for a method $\operatorname{AlgorithmWithPrediction}$ that takes advice from $\X$ and returns upper bounds $U_t$ on its cost.
			The goal of $\operatorname{OnlineAlgorithm}$ is low regret over the sequence $U_t$, so that on-average $C_t$ is upper bounded by the smallest possible average of $U_t$, up to some error decreasing in $T$.
			For specific instantiations of algorithms and feedback see Table~\ref{tab:summary}.\looseness-1
		}
	\end{algorithm}
\end{minipage}
\hfill
\begin{minipage}{0.42\linewidth}
	\centering
	\includegraphics[width=0.9\textwidth]{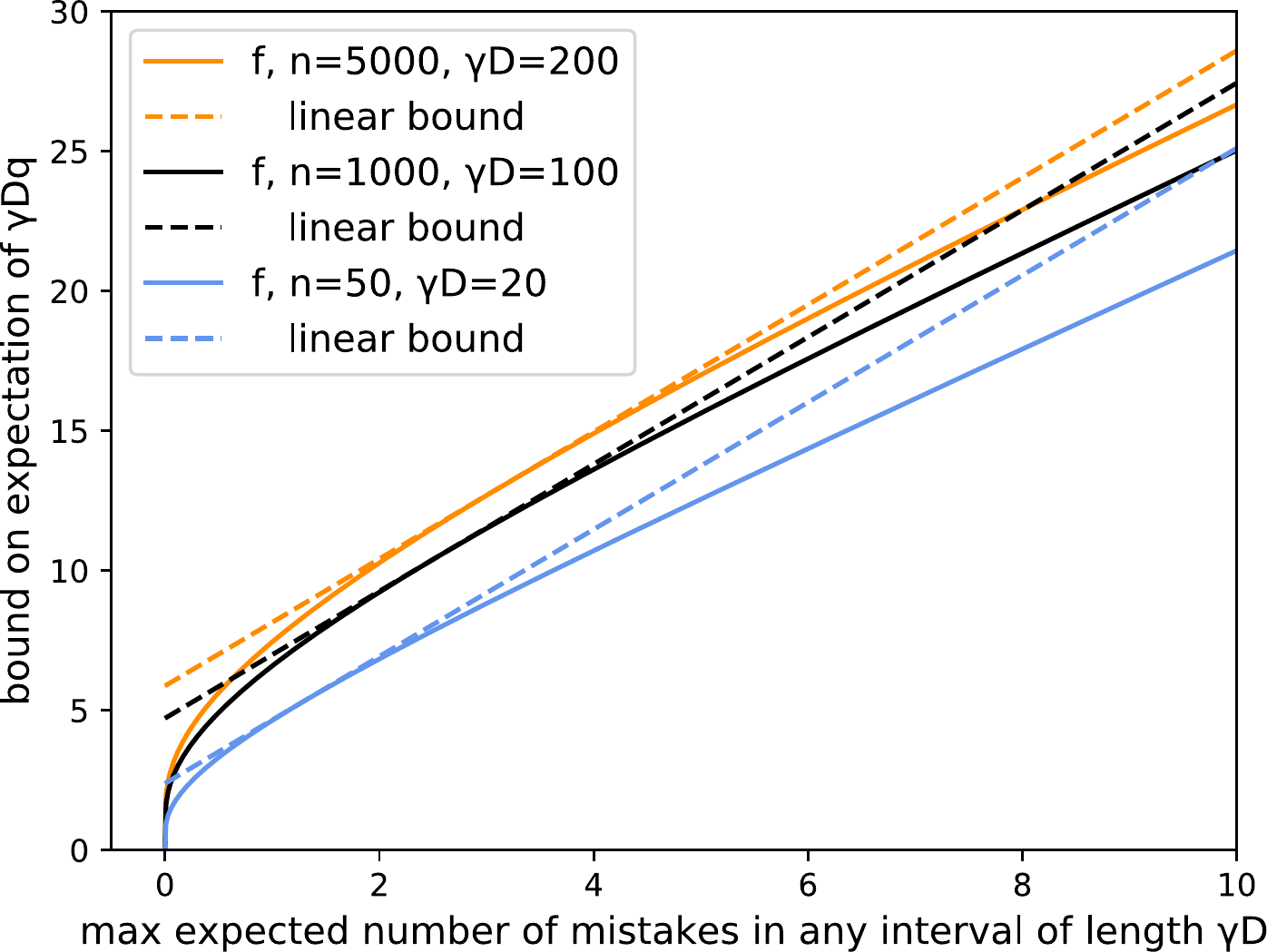}
	\captionof{figure}{Bounds $f$ (c.f. Lemma~\ref{lem:pbd}) for three different $n$ and $\gamma D$ on the expected largest number of mistakes in any $\gamma D$-interval as a function of the maximum expected number $U_s(\*p)$ in any interval.\looseness-1}
	\label{fig:lambert}
\end{minipage}

Many guarantees for algorithms with predictions are already amenable to being optimized, although we will see that they can require some massaging in order to be useful.
In many cases the guarantee is a distance metric between the prediction $\*x$ and some instance-dependent perfect prediction $\*x^*$, which is convex and thus straightforward to learn.
This is roughly true of our bipartite matching example, although taking the minimum of a constant and the distance $\|\* x-\*x^*(\*c)\|_1$ between the predicted and actual duals makes the problem non-convex.
However, we can further upper bound their result by $\tilde\BigO\left(m\sqrt n\|\*x-\*x^*(\*c)\|_1\right)$;
note that \citet{dinitz2021duals} also optimize this quantity, not the tighter upper bound with the minimum.
While this might seem to be enough for step one, \citet{dinitz2021duals} also require the prediction $\*x$ to be integral, which is difficult to combine with standard online procedures.
In order to get around this issue, we show that rounding any nonnegative real vector to the closest integer vector incurs only a constant multiplicative loss in terms of the $\ell_1$-distance.
\begin{claim}\label{clm:round}
	Given any vectors $\*x\in\Z^n$ and $\*y\in\R^n$, let $\*{\tilde y}\in\Z^n$ be the vector whose elements are those of $\*y$ rounded to the nearest integer.
	Then $\|\*x-\*{\tilde y}\|_1\le2\|\*x-\*y\|_1$.
\end{claim}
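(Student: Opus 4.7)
The plan is to reduce to a coordinatewise statement and then apply the triangle inequality together with the defining property of nearest-integer rounding. Since $\|\*x-\*{\tilde y}\|_1=\sum_{i=1}^n|x_i-\tilde y_i|$ and analogously for $\|\*x-\*y\|_1$, it suffices to show the scalar inequality $|x_i-\tilde y_i|\le 2|x_i-y_i|$ for every coordinate $i$, where $x_i\in\Z$, $y_i\in\R$, and $\tilde y_i\in\Z$ is the integer closest to $y_i$.

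For a fixed coordinate, I would first observe the triangle inequality
\[
|x_i-\tilde y_i|\le|x_i-y_i|+|y_i-\tilde y_i|.
\]
The key step is to bound $|y_i-\tilde y_i|$ by $|y_i-x_i|$. This follows because $\tilde y_i$ is by definition a minimizer of $|y_i-z|$ over $z\in\Z$, and $x_i$ is a particular element of $\Z$, so $|y_i-\tilde y_i|\le|y_i-x_i|$. Substituting into the triangle inequality gives the required $|x_i-\tilde y_i|\le 2|x_i-y_i|$. Summing over $i$ yields the stated bound on the $\ell_1$ norms.

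There is essentially no obstacle here: one just has to be careful to use the fact that $x_i$ itself is an integer, which is precisely what makes it a valid competitor in the nearest-integer optimization defining $\tilde y_i$. Without this integrality of $\*x$ the naive bound would only give $|y_i-\tilde y_i|\le 1/2$, which suffices when $|x_i-y_i|\ge 1/2$ but not in general; the argument above sidesteps this case analysis entirely and gives the factor of $2$ uniformly.
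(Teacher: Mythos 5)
Your proof is correct. It shares the paper's overall skeleton (work coordinatewise, apply the triangle inequality $|x_i-\tilde y_i|\le|x_i-y_i|+|y_i-\tilde y_i|$, then control the rounding error $|y_i-\tilde y_i|$), but the key step is genuinely different and cleaner. The paper partitions the indices into a set $S$ where the rounding moves $\tilde y_i$ \emph{toward} $x_i$ (there $|x_i-\tilde y_i|\le|x_i-y_i|$ with factor $1$) and its complement, where it argues $|x_i-y_i|\ge 1/2\ge|y_i-\tilde y_i|$; this requires the case analysis and the half-integer observation. You instead invoke the defining optimality of nearest-integer rounding -- $\tilde y_i$ minimizes $|y_i-z|$ over $z\in\Z$, and $x_i\in\Z$ is a feasible competitor -- to get $|y_i-\tilde y_i|\le|y_i-x_i|$ uniformly, with no cases. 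Both arguments use the integrality of $\*x$ in an essential way (the paper to get the $1/2$ lower bound on $|x_i-y_i|$ off $S$, you to make $x_i$ a valid competitor), and both give the same constant $2$, which is tight at $y_i=x_i+1/2$ rounded away from $x_i$. A side benefit of your version: since the bound is purely coordinatewise, it immediately yields the weighted variant $\|\*x-\*{\tilde y}\|_{\*b,1}\le2\|\*x-\*y\|_{\*b,1}$ of Claim~\ref{clm:bround} and the $\ell_\infty$ analogue in Appendix~\ref{sec:graph}, which the paper proves separately by repeating the $S$-based argument for each norm.
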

\begin{proof}
	Let $S\subset[n]$ be the set of indices $i\in[n]$ for which $\*x_{[i]}\ge\*y_{[i]}\iff\*{\tilde y}_{[i]}=\lceil\*y_{[i]}\rceil$.
	For $i\in[n]\backslash S$ we have $|\*x_{[i]}-\*y_{[i]}|\ge1/2\ge|\*{\tilde y}_{[i]}-\*y_{[i]}|$ so it follows by the triangle inequality that
	\begin{align*}
	\|\*x-\*{\tilde y}\|_1
	=\hspace{-.3mm}\sum_{i\in S}|\*x_{[i]}-\*{\tilde y}_{[i]}|+\hspace{-2.5mm}\sum_{i\in[n]\backslash S}\hspace{-2.5mm}|\*x_{[i]}-\*{\tilde y}_{[i]}|
	&\le\hspace{-.3mm}\sum_{i\in S}|\*x_{[i]}-\*y_{[i]}|+\hspace{-2.5mm}\sum_{i\in[n]\backslash S}\hspace{-2.5mm}|\*x_{[i]}-\*y_{[i]}|+|\*y_{[i]}-\*{\tilde y}_{[i]}|\\
	&\le\hspace{-.3mm}\sum_{i\in S}|\*x_{[i]}-\*y_{[i]}|+2\hspace{-2.5mm}\sum_{i\in[n]\backslash S}\hspace{-2.5mm}|\*x_{[i]}-\*y_{[i]}|
	\le2\|\*x-\*y\|_1
	\end{align*}
\end{proof}

Combining this projection with the convex relaxation above and the result of \citet{dinitz2021duals} shows that for any predictor $\*x\in\R^n$ we have (up to affine transformation) a convex upper bound $U_t(\*x)=\|\*x-\*x^*(\*c_t)\|_1$ on the runtime of the Hungarian method, as desired.
We now move to step two.\looseness-1

{\bf Step 2 - Online learning:}
Once one has an upper bound $U_t$ on the cost, the second component of our approach is to apply standard online learning algorithms and results to these upper bounds to obtain guarantees for learning predictions.
In online learning, on each of a sequence of rounds $t=1,\dots,T$ we predict $\*x_t\in\X$ and suffer $U_t(\*x_t)$ for some adversarially chosen loss function $U_t:X\mapsto\R$ that we then observe;
the goal is to use this information to minimize {\bf regret} $\sum_{t=1}^TU_t(\*x_t)-\min_{\*x\in\X}U_t(\*x)$, with the usual requirement being that it is {\bf sublinear} in $T$ and thus decreasing on average over time.
For bipartite matching, we can just apply regular {\bf projected online (sub)gradient descent} ({\bf OGD}) to losses $U_t(\*x)=\|\*x-\*x^*(\*c_t)\|_1$, i.e. the update rule $\*x_{t+1}\gets\argmin_{\*x\in\X}\alpha\langle\nabla U_t(\*x_t),\*x\rangle+\frac12\|\*x\|_2^2$ for appropriate step-size $\alpha>0$;
as shown in Theorem~\ref{thm:matching}, this yields sublinear regret via a textbook result.
The simplicity here is the point: by relegating as much of the difficulty as we can to obtaining an easy-to-optimize upper bound in step one, we make the actual learning-theoretic component easy.
However, as we show in the following sections, it is not always easy to obtain a suitable upper bound, nor is it always obvious what online learning algorithm to apply, e.g. if the upper bounds are non-convex.\looseness-1

Our use of online learning is motivated by three factors:
(1)~doing well on non-i.i.d. instances is important in practical applications, e.g. in job scheduling where resource demand changes over time;
(2)~its extensive suite of algorithms lets us use different methods to tailor the approach to specific settings and obtain better bounds, as we exemplify via our use of exponentiated gradient over the simplex geometry in Section~\ref{sec:migration} and KT-OCO over unbounded Euclidean space in Section~\ref{sec:sched};
(3)~the existence of classic online-to-batch procedures for converting regret into sample complexity guarantees~\citep{cesa-bianchi2004online2batch}, i.e. bounds on the number of samples needed to obtain an $\varepsilon$-suboptimal predictor w.p. $\ge1-\delta$.
While online-to-batch conversion can be suboptimal~\citep{hazan2014beyondregret}, as we show in Theorems~\ref{thm:matching},~\ref{thm:bmatching}, and~\ref{thm:graph} its application to various graph algorithms with predictions problems {\em improves} upon existing sample complexity results.
For completeness, we formalize online-to-batch conversion as Lemma~\ref{lem:o2b} in the Appendix.\looseness-1

We now show how to apply the second online learning step to bipartite matching by improving upon the result of \citet{dinitz2021duals} in Theorem~\ref{thm:matching};
the improvement is the entirely new regret bound against adversarial cost vectors and a $\tilde\BigO(n)$ lower sample complexity.
Note how the proof needs only their existing algorithmic contribution, Claim~\ref{clm:round}, and some standard tools in online convex optimization.\looseness-1

\begin{theorem}\label{thm:matching}
	Suppose we have a fixed bipartite graph with $n\ge3$ vertices and $m\ge1$ edges.
	\begin{enumerate}[leftmargin=*,topsep=-1pt,noitemsep]\setlength\itemsep{2pt}
		\item For any cost vector $\*c\in\Z_{\ge0}^m$ and any dual vector $\*x\in\R^n$ there exists an algorithm for MWPM that runs in time 
		$$\tilde\BigO\left(m\sqrt n\min\left\{U(\*x),\sqrt n\right\}\right)
		\le\tilde\BigO\left(m\sqrt nU(\*x)\right)$$
		for $U(\*x)=\|\*x-\*x^*(\*c)\|_1$, where $\*x^*(\*c)$ the optimal dual vector associated with $\*c$.
		\item There exists a poly-time algorithm s.t. for any $\delta,\varepsilon>0$ and distribution $\D$ over integer $m$-vectors with $\ell_\infty$-norm $\le C$ it takes  $\BigO\left(\left(\frac{Cn}\varepsilon\right)^2\log\frac1\delta\right)$ samples from $\D$ and returns $\*{\hat x}$ s.t. w.p. $\ge1-\delta$:
		$$\E_{\*c\sim\D}\| \*{\hat x}-\*x^*(\*c)\|_1\le\min_{\|\*x\|_\infty\le C}\E_{\*c\sim\D}\|\*x-\*x^*(\*c)\|_1+\varepsilon$$
		\item Let $\*c_1,\dots,\*c_T\in\Z_{\ge0}^m$ be an adversarial sequence of $m$-vectors with $\ell_\infty$-norm $\le C$.
		Then OGD with appropriate step-size has regret
		$$\max_{\|\*x\|_\infty\le C}\sum_{t=1}^T\|\*x_t-\*x^*(\*c_t)\|_1-\|\*x-\*x^*(\*c_t)\|_1\le Cn\sr{2T}$$
	\end{enumerate}
\end{theorem}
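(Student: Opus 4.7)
The plan is to handle the three parts in sequence, reusing the ingredients provided by \citet{dinitz2021duals} and Claim~\ref{clm:round} for Part~1, applying a standard projected-OGD regret bound for Part~3, and then converting the regret into sample complexity via online-to-batch for Part~2.

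For Part~1, I would take the result of \citet{dinitz2021duals}---which gives runtime $\tilde\BigO(m\sqrt n\min\{\|\*{\tilde x}-\*x^*(\*c)\|_1,\sqrt n\})$ for any integer-valued initial dual $\*{\tilde x}\in\Z^n$---and extend it to real-valued predictions $\*x\in\R^n$ by rounding each coordinate to the nearest integer to obtain $\*{\tilde x}$ before running their method. Claim~\ref{clm:round} gives $\|\*{\tilde x}-\*x^*(\*c)\|_1\le 2U(\*x)$, and the factor of two is absorbed by $\tilde\BigO$. The relaxation $\min\{U(\*x),\sqrt n\}\le U(\*x)$ is immediate.

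For Part~3, I would observe that each loss $U_t(\*x)=\|\*x-\*x^*(\*c_t)\|_1$ is convex, with subgradient entries in $\{-1,0,+1\}$ and therefore $\|\nabla U_t(\*x)\|_2\le\sqrt n$; moreover $\X=\{\*x\in\R^n:\|\*x\|_\infty\le C\}$ has $\ell_2$-diameter at most $2C\sqrt n$. Plugging these quantities into the textbook projected-OGD regret guarantee with an appropriate constant step-size yields regret $\BigO(Cn\sqrt T)$, and a careful choice of constants recovers the precise bound $Cn\sqrt{2T}$.

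For Part~2, I would apply the online-to-batch conversion of Lemma~\ref{lem:o2b} to the OGD trajectory from Part~3 run on i.i.d.\ samples $\*c_1,\dots,\*c_T\sim\D$. Since $U_t$ is bounded by $2Cn$ on $\X$, combining the Part~3 regret bound with Azuma--Hoeffding inequalities for the empirical risk of the iterates and for the loss at the population optimum, and then invoking Jensen's inequality to pass to the averaged iterate $\*{\hat x}$, gives excess population risk at most $R_T/T+\BigO(Cn\sqrt{\log(1/\delta)/T})$ with probability $\ge 1-\delta$. Choosing $T=\BigO((Cn/\varepsilon)^2\log(1/\delta))$ makes both error terms at most $\varepsilon/2$.

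The main obstacle is really just bookkeeping: ensuring that the real-valued OGD iterates are faithfully converted to integer duals via the rounding of Part~1 so that the Hungarian-method guarantee applies to the learned predictor, and applying the concentration inequalities in the online-to-batch step to the correct martingale differences. No conceptual novelty beyond the ingredients already cited is required; the simplicity of the argument is itself the point, since it directly realizes the two-step framework of the paper.
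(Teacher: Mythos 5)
Your proposal is correct and follows essentially the same route as the paper: Part~1 via \citet[Theorem~13]{dinitz2021duals} combined with Claim~\ref{clm:round}, Part~3 via the textbook projected-OGD bound using the $\sqrt n$-Lipschitzness of the $\ell_1$ losses and the $C\sqrt n$ Euclidean bound on the domain, and Part~2 via online-to-batch conversion (Lemma~\ref{lem:o2b}) applied to the averaged iterate.
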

\begin{proof}
	The first result follows by combining \citet[Theorem~13]{dinitz2021duals} with Claim~\ref{clm:round}.
	For the third result, let $\*x_t$ be the sequence generated by running OGD \citep{zinkevich2003oco} with step size $C/\sqrt{2T}$ on the losses $U_t(\*x)=\|\*x-\*x^*(\*c_t)\|_1$ over domain $[-C,C]^n$.
	Since these losses are $\sqrt n$-Lipschitz and the duals are $C\sqrt n$-bounded in Euclidean norm the regret guarantee follows from~\citet[Corollary~2.7]{shalev-shwartz2011oco}.
	For the second result, we apply standard online-to-batch conversion to the third result, i.e. we draw $T=\Omega\left(\left(\frac{Cn}\varepsilon\right)^2\log\frac1\delta\right)$ samples $\*c_t$, run OGD as above on the resulting losses $U_t$, and set $\*{\hat x}=\frac1T\sum_{t=1}^T\*x_t$ to be the average of the resulting predictions $\*x_t$.
	The result follows by Lemma~\ref{lem:o2b}.\looseness-1
\end{proof}


This concludes an overview of our two-step approach for obtaining learning guarantees for algorithms with predictions.
To summarize, we propose to (1) obtain simple-to-optimize upper bounds $U_t(\*x)$ on the cost of the target algorithm on instance $t$ as a function of prediction $\*x$ and (2) optimize $U_t(\*x)$ using online learning.
While conceptually simple, even in this illustrative example it already improves upon past work;
in the sequel we demonstrate further results that this approach makes possible.
Note that, like \citet{dinitz2021duals}, we are also able to generalize Theorem~\ref{thm:matching} to $\*b$-matchings, which we do in  Appendix~\ref{sec:bmatching};
another advantage of our approach is that it lets us prove online and statistical learning in the case where the demand vector $\*b$ varies across instances rather than staying fixed as in \citet{dinitz2021duals}.
Finally, in Appendix~\ref{sec:graph} we also improve upon the more recent learning-theoretic results of~\citet{chen2022faster} for related graph algorithms with predictions problems.\looseness-1

\vspace{-2mm}
\section{Predicting requests for page migration}\label{sec:migration}
\vspace{-2mm}

Equipped with our two-step approach for deriving guarantees for learning predictors, we investigate several more important problems in combinatorial optimization, starting with the page migration problem.
Our results demonstrate that even for learning such simple predictors there are interesting technical challenges in deriving a {\em learnable} upper bound.
Nevertheless, once this is accomplished the second step of our approach is again straightforward.

{\bf Page migration:} 
Consider a server that sees a sequence of requests $s_{[1]},\dots,s_{[n]}$ from metric space $(\K,d)$ and at each timestep decides whether to change its state $a_{[i]}\in\K$ at cost $Dd(a_{[i-1]},a_{[i]})$ for some $D>1$;
it then suffers a further cost $d(a_{[i]},s_{[i]})$.
The {\bf online page migration} (OPM) problem is then to minimize the cost to the server.
Recently, \citet{indyk2020online} studied a setting where we are given a sequence of predicted points $\hat s_{[1]},\dots,\hat s_{[n]}\in\K$ to aid the page migration algorithm.
They show that if there exists $\gamma,q\in(0,1)$ s.t. $\gamma D\in[n]$ and for any $i\in[n]$ we have $\sum_{j=i}^{i+\gamma D-1}1_{s_{[j]}\ne\hat s_{[j]}}\le q\gamma D$ then there exists an algorithm with competitive ratio $(1+\gamma)(1+\BigO(q))$ w.r.t. to the offline optimal.
This algorithm depends on $\gamma$ but not $q$, so we study the setting where $\gamma$ is fixed.

{\bf Deriving an upper bound:}
As in the previous section, the predictions are discrete, so to use our approach we must convert it into a continuous problem. 
As we have fixed $\gamma$, the competitive ratio is an affine function of the following upper bound on $q$:
$$Q(\hat s,s)=\frac1{\gamma D}\max_{i\in[n-\gamma D+1]}\sum_{j=i}^{i+\gamma D-1}1_{\hat s_{[j]}\ne s_{[j]}}$$
We assume that the set of points $\K$ is finite with indexing $k=1,\dots,|\K|$ and use this to introduce our continuous relaxation, a natural randomized approach converting the problem of learning a prediction into $n$ experts problems on $|\K|$ experts.
For each $j\in[n]$ we define a probability vector $\*p_{[j]}\in\triangle_{|\K|}$ governing the categorical r.v. $\hat s_{[j]}$, i.e. $\Pr\{\hat s_{[j]}=k\}=\*p_{[j,k]}~\forall~k\in\K$.
Under these distributions the expected competitive ratio will be $(1+\gamma)(1+\BigO(\E_{\hat s\sim\*p}Q(\hat s,s)))$, for $\*p$ the product distribution of the vectors $\*p_j$.
Note that forcing each $\*p_j$ to be a one-hot vector recovers the original approach with no loss, so optimizing $\E_{\hat s\sim\*p}Q(\hat s,s)$ over $\*p\in\triangle_{|\K|}^n$ would find a predictor that fits the original result.

However, $\E_{\hat s\sim\*p}Q(\hat s,s)$ is not convex in $\*p$.
The simplest relaxation is to replace the maximum by summation, but this leads to a worst-case bound of $\BigO\left(\frac n{\gamma D}\right)$.
We instead bound $\E_{\hat s\sim\*p}Q(\hat s,s)$---and thus also the expected competitive ratio---by a function of the following maximum over expectations:
\begin{align*}
	U_s(\*p)
	&=\max_{i\in[n-\gamma D+1]}\E_{\hat s\sim\*p}\sum_{j=i}^{i+\gamma D-1}1_{\hat s_{[j]}\ne s_{[j]}}
	\ifthenelse{\value{col}=2}{\\&}{}
	=\max_{i\in[n-\gamma D+1]}\sum_{j=i}^{i+\gamma D-1}1-\langle\*s_{[j]},\*p_{[j]}\rangle
\end{align*}
where $\*s_{[j,k]}=1_{s_{[j]}=k}~\forall~k\in\K$, i.e. $\*s_{[j]}$ encodes the location in $\K$ of the $j$th request.
As a maximum over $n-\gamma D+1$ convex functions this objective is also convex.
Note also that if $U_s(\*p)$ is zero---i.e. the probability vectors are one-hot and perfect---then $\E_{\hat s\sim\*p}Q(\hat s,s)\ge q$ will also be zero.
In fact, $q$ is upper-bounded by a monotonically increasing function of $U_s(\*p)$ that is zero at the origin, but as this function is concave and non-Lipschitz (c.f. Figure~\ref{fig:lambert}) we incur an additive $\BigO\left(\frac{\log(n-\gamma D+1)}{\gamma D}\right)$ loss to obtain an online-learnable upper bound.
This is formalized in the following result (proof in~\ref{subsec:pbd}).
\begin{lemma}\label{lem:pbd}
	There exist constants $a<e,b<2/e$ and a monotonically increasing function $f:[0,\infty)\mapsto[0,\infty)$ s.t. $f(0)=0$ and 
	\begin{align*}
	\E_{\hat s\sim\*p}Q(\hat s,s)
	&\le\frac{f(U_s(\*p))}{\gamma D}
	\ifthenelse{\value{col}=2}{\\&}{}
	\le\frac{aU_s(\*p)+b\log(n-\gamma D+1)}{\gamma D}
	\end{align*}
\end{lemma}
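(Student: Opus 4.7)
The plan is to treat $Q(\hat s,s)$ as the maximum of $N := n-\gamma D+1$ sliding-window sums of independent Bernoulli indicators and bound its expectation by a Chernoff/union argument, then extract the monotone envelope $f$ and its linear upper bound by choosing the tuning parameter well.

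First I would set up notation: let $Y_j = 1_{\hat s_{[j]}\ne s_{[j]}}$, which under $\*p$ is Bernoulli with parameter $q_j = 1-\langle\*s_{[j]},\*p_{[j]}\rangle$, and the $Y_j$ are mutually independent because $\hat s$ is a product distribution. Set $X_i = \sum_{j=i}^{i+\gamma D-1}Y_j$, so that $\gamma D\cdot Q(\hat s,s)=\max_{i\in[N]} X_i$ and $\E X_i=\sum_j q_j \le U_s(\*p)$ for every $i$. For any $\lambda>0$, using $1+x\le e^x$,
\[
\E e^{\lambda X_i}=\prod_{j=i}^{i+\gamma D-1}\bigl(1+q_j(e^\lambda-1)\bigr)\le \exp\bigl(\E X_i \cdot(e^\lambda-1)\bigr)\le \exp\bigl(U_s(\*p)(e^\lambda-1)\bigr).
\]
Combining Jensen's inequality with a max-over-exponentials union bound then yields
\[
\exp\bigl(\lambda\,\E\max_i X_i\bigr)\le \E\exp\bigl(\lambda\max_i X_i\bigr)\le \sum_{i=1}^N \E e^{\lambda X_i}\le N\exp\bigl(U_s(\*p)(e^\lambda-1)\bigr),
\]
so $\E\max_i X_i \le [\log N + U_s(\*p)(e^\lambda-1)]/\lambda$ for every $\lambda>0$.

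Next I would define $f(u) := \inf_{\lambda>0}\;[\log N + u(e^\lambda-1)]/\lambda$. For each $\lambda>0$ the inner expression is linear and strictly increasing in $u$, so $f$ is monotonically increasing; and $f(0)=\inf_\lambda \log(N)/\lambda=0$ as $\lambda\to\infty$. Dividing the display above by $\gamma D$ gives the first inequality $\E_{\hat s\sim\*p}Q(\hat s,s)\le f(U_s(\*p))/(\gamma D)$. For the second inequality I would specialize to a fixed $\lambda = c$ slightly larger than $e/2$, yielding $a=(e^c-1)/c$ and $b=1/c$. A direct numerical check shows that at $c=e/2$ one has $a=\tfrac{2}{e}(e^{e/2}-1)\approx 2.13<e$ and $b=2/e$; by continuity, for $c=e/2+\epsilon$ with sufficiently small $\epsilon>0$ both $a<e$ and $b<2/e$ hold simultaneously, and substituting bounds $f(U_s(\*p))$ linearly.

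The main obstacle is not the probabilistic step, which is a textbook Chernoff/MGF plus union calculation, but the linear-relaxation step: the two target constants $a<e$ and $b<2/e$ cannot both be met at either extreme of $\lambda$ (small $\lambda$ makes $b$ large, large $\lambda$ makes $a$ explode), and one must verify that a nonempty window of $\lambda$ simultaneously satisfies both. The choice $\lambda$ just above $e/2$ is tight on the $b$ side and comfortably inside the regime $a<e$, which pins the argument down.
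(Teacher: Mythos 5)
Your proposal is correct, and its probabilistic core coincides exactly with the paper's: both write $\gamma D\,Q$ as the maximum of $N=n-\gamma D+1$ overlapping Poisson-binomial window sums, bound the MGF via $1+q(e^\lambda-1)\le e^{q(e^\lambda-1)}$, and combine Jensen with a union bound over windows to get $\E\max_i X_i\le[\log N+U_s(\*p)(e^\lambda-1)]/\lambda$ for all $\lambda>0$. Where you diverge is the extraction of $f$ and the constants. The paper commits to the specific near-optimal choice $t=W(\log(N)/x)+1$ (Lambert $W$), defines $f$ by that formula, verifies concavity by computing $f''$ explicitly, and then bounds $f$ by its tangent line at $x=\log(N)/e$ to read off $a<e$ and $b<2/e$. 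You instead define $f$ as the Chernoff envelope $\inf_{\lambda>0}[\log N+u(e^\lambda-1)]/\lambda$ (which is non-decreasing with $f(0)=0$, as you note) and obtain the linear bound by evaluating at a single fixed $\lambda=e/2+\epsilon$, checking numerically that $a=(e^\lambda-1)/\lambda\approx2.13<e$ and $b=1/\lambda<2/e$. Your route is more elementary — it avoids the Lambert function and the derivative computations entirely — at the cost of a slightly less explicit $f$; the paper's route produces a closed-form $f$ (the one plotted in its Figure~1) and its tangent-line constants. You are also right to flag that $\lambda=e/2$ exactly gives $b=2/e$, so the perturbation to $\lambda=e/2+\epsilon$ is needed for the strict inequality; this is handled correctly by the monotonicity and continuity of $(e^\lambda-1)/\lambda$.
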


We now have an convex bound on the competitive ratio for the OPM algorithm of \citet{indyk2020online}.
For both this and bipartite matching we resorted to a relaxation of a discrete problem.
However, whereas before we only incurred a multiplicative loss (c.f. Claim~\ref{clm:round}), here we have an additive loss that makes the bound meaningful only for $\gamma D\gg\log n$.
However, as the method we propose optimizes $U_s(\*p)$, which bounds $q$ with no additive error via the function $f$ in Lemma~\ref{lem:pbd}, in-practice we may expect it to help minimize $q$ in all regimes.
Note that the non-Lipschitzness near zero that prevents using $f$ for formal regret guarantees comes from the poor tail behavior of Poisson-like random variables with small means, which we do not expect can be significantly improved.

{\bf Learning guarantees:}
Having established an upper bound, in Theorem~\ref{thm:migration} we again show how a learning-theoretic result follows from standard online learning.
This time, instead of OGD we run {\bf exponentiated (sub)gradient} ({\bf EG})~\citep{shalev-shwartz2011oco}, a classic method for learning from experts, on each of $n$ simplices to learn the probabilities $\*p_{[j]}~\forall~j\in[n]$.
The multiplicative update $\*x_{t+1}\propto\*x_t\odot\exp(-\alpha\nabla U_t(\*x_t))$ of EG is notable for yielding regret logarithmic in the size $|\K|$ of the simplices, which is important for large metric spaces.
Note that as the relaxation is randomized, our algorithms output a dense probability vector;
to obtain a prediction for OPM we sample $\hat s_{t_[j]}\sim\*p_{[j]}~\forall~j\in[n]$.\looseness-1

\ifdefined\onecol\newpage\fi
\begin{theorem}\label{thm:migration}
	Let $(\K,d)$ be a finite metric space.
	\begin{enumerate}[leftmargin=*,topsep=-1pt,noitemsep]\setlength\itemsep{2pt}
		\item For any request sequence $s$ and any set of probability vectors $\*p\in\triangle_{|\K|}^n$ there exists an algorithm for OPM with expected competitive ratio 
		$$(1+\gamma)\left(1+\BigO\left(\frac{U_s(\*p)+\log(n-\gamma D+1)}{\gamma D}\right)\right)$$
		\item There exits a poly-time algorithm s.t. for any $\delta,\varepsilon>0$ and distribution $\D$ over request sequences $s\in\K^n$ it takes $\BigO\left(\left(\frac{\gamma D}\varepsilon\right)^2\left(n^2\log|\K|+\log\frac1\delta\right)\right)$ samples from $\D$ and returns $\*{\hat p}$ s.t. w.p. $\ge1-\delta$:\looseness-1
		$$\E_{s\sim\D}U_s(\*{\hat p})\le\min_{\*p\in\triangle_{|\K|}^n}\E_{s\sim\D}U_s(\*p)+\varepsilon$$
		\item Let $s_1,\dots,s_T$ be an adversarial sequence of request sequences.
		Then updating the distribution $\*p_{t[j]}$ over $\triangle_{|\K|}$ at each timestep $j\in[n]$ using EG with appropriate step-size has regret
		$$\max_{\*p\in\triangle_{|\K|}^n}\sum_{t=1}^TU_{s_t}(\*p_t)-U_{s_t}(\*p)\le\gamma Dn\sqrt{2T\log|\K|}$$
	\end{enumerate}
\end{theorem}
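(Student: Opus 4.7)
\textbf{Proof plan for Theorem~\ref{thm:migration}.} The first claim follows directly from combining the algorithmic result of \citet{indyk2020online}---whose $(1+\gamma)(1+\BigO(q))$ competitive ratio holds deterministically for any realization of $\hat s$ satisfying the window-quality assumption with parameter $q$---with the convex upper bound of Lemma~\ref{lem:pbd}. I would draw $\hat s \sim \*p$ coordinate-wise, use linearity of expectation to obtain an expected competitive ratio of $(1+\gamma)(1+\BigO(\E_{\hat s\sim\*p} Q(\hat s,s)))$, and then apply Lemma~\ref{lem:pbd} to upper bound $\E_{\hat s\sim\*p} Q(\hat s,s)$ by $(a U_s(\*p) + b\log(n-\gamma D+1))/(\gamma D)$, which absorbs into the $\BigO(\cdot)$ as stated.

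For the third (adversarial) claim, the plan is to run a separate copy of EG on each of the $n$ simplices $\triangle_{|\K|}$ that govern $\*p_{[j]}$. The key observation is that $U_{s_t}(\*p)$ is a maximum of $n-\gamma D+1$ functions, each linear in $\*p$, so a valid subgradient at $\*p_t$ is constructed by choosing any maximizing index $i_t^\ast$ and setting the block along $\*p_{[j]}$ to $-\*s_{t[j]}$ when $j\in[i_t^\ast,i_t^\ast+\gamma D-1]$ and to the zero vector otherwise. Since each $\*s_{t[j]}$ is one-hot, every per-simplex subgradient block has $\ell_\infty$-norm at most $1$. Convexity of $U_{s_t}$ then gives
\begin{equation*}
\sum_{t=1}^T U_{s_t}(\*p_t)-U_{s_t}(\*p)\;\le\;\sum_{j=1}^n\sum_{t=1}^T\langle g_{t[j]},\,\*p_{t[j]}-\*p_{[j]}\rangle,
\end{equation*}
so the regret decomposes across the $n$ simplices. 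Applying the textbook EG regret bound on each simplex (e.g.\ \citet[Corollary~2.14]{shalev-shwartz2011oco}) with step size tuned to $T$ and $|\K|$ yields $\BigO(\sqrt{T\log|\K|})$ per simplex; summing over the $n$ simplices and accounting for the worst-case subgradient magnitude across the $\gamma D$-window recovers the claimed $\gamma D n\sqrt{2T\log|\K|}$ bound.

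The second claim (sample complexity) then follows from Part~3 by standard online-to-batch conversion (Lemma~\ref{lem:o2b}): draw $T$ i.i.d.\ sequences $s_t\sim\D$, execute the adversarial EG procedure above, and return the averaged iterate $\*{\hat p}=\frac1T\sum_{t=1}^T\*p_t$. Jensen's inequality applied coordinate-wise to the convex $U_s(\cdot)$ together with the Part~3 regret bound and a standard high-probability martingale concentration argument gives expected suboptimality $\BigO\!\left(\gamma Dn\sqrt{\log|\K|/T}+\gamma D\sqrt{\log(1/\delta)/T}\right)$; setting this at most $\varepsilon$ and squaring yields the stated sample complexity.

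The main obstacle is the subgradient analysis of $U_{s_t}$ in Part~3: although the outer maximum couples the $n$ simplices in a way that the objective is non-smooth and non-separable, one has to recognize that any particular maximizer's subgradient is block-separable across simplices, so that $n$ independent EG runs are in fact sound and yield a product-form regret bound. Parts~1 and~2 are then mechanical applications of, respectively, Lemma~\ref{lem:pbd} and online-to-batch conversion.
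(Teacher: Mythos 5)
Your proposal is correct and follows essentially the same route as the paper: part~1 combines the competitive-ratio guarantee of \citet{indyk2020online} with Lemma~\ref{lem:pbd}, part~3 runs $n$ parallel EG instances on the per-timestep simplices, and part~2 is online-to-batch conversion via Lemma~\ref{lem:o2b}. Your block-separable subgradient analysis of the max-of-linear structure is a more explicit version of the paper's one-line ``$\gamma D$-Lipschitz'' invocation of \citet[Theorem~2.15]{shalev-shwartz2011oco}, and in fact your observation that each per-simplex subgradient block has $\ell_\infty$-norm at most $1$ shows the stated bound is conservative by a factor of $\gamma D$.
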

\begin{proof}
	The first result follows by combining \citet[Theorem~1]{indyk2020online} with Lemma~\ref{lem:pbd}.
	For the third let $\*p_t$ be generated by running $n$ exponentiated gradient algorithms with step-size $\sqrt{\frac{\log|\K|}{2\gamma^2D^2T}}$ on losses $U_{s_t}(\*p)$ over $\triangle_{|\K|}^n$.
	Since these are $\gamma D$-Lipschitz and the maximum entropy is $\log|\K|$, the regret follows by \citep[Theorem~2.15]{shalev-shwartz2011oco}.
	For the second result, apply standard online-to-batch conversion to the third, i.e. draw $T=\Omega\left(\left(\frac{\gamma D}\varepsilon\right)^2\left(n^2\log|\K|+\log\frac1\delta\right)\right)$ samples $\*s_t$, run EG on $U_{s_t}(\*p)$ as above, and set $\*{\hat p}=\frac1T\sum_{t=1}^T\*p_t$ to be the average of the resulting actions.
	The result follows by Lemma~\ref{lem:o2b}.\looseness-1
\end{proof}

As before, this result first shows how the quantity of interest---here the competitive ratio---is upper-bounded by an affine function of some quality measure $U_s(\*p)$, for which we then provide regret and statistical guarantees using online learning.
The difficulty deriving a suitable bound exemplifies the technical challenges that arise in learning predictors, and may also be encountered in other sequence prediction problems such as TCP \citep{bamas2020primal}.
Nevertheless, our approach does yield an online procedure that incurs only $\BigO(\frac{\log n}{\gamma D})$ additive error over \citet{indyk2020online} in the case of a perfect predictor and,
unlike their work, we provide an algorithm for learning the predictor itself.
In Appendix~\ref{sec:linearopm} we also show an auto-regressive extension which does {\em not} require learning a distribution for each timestep $j\in[n]$.\looseness-1

\vspace{-2mm}
\section{Learning linear predictors with instance-feature inputs}\label{sec:sched}
\vspace{-2mm}

So far we have considered only {\em fixed} predictors, either optima-in-hindsight in the online setting or a population risk minimizers for i.i.d. data.
Actual instances can vary significantly and so a fixed predictor may not be very good, e.g. in the example of querying a sorted array it means always returning the same index.
In the online setting one can consider methods that adapt to dynamic comparators~\cite{zinkevich2003oco,jadbabaie2015dynamic,mokhtari2016dynamic}, which are also applicable to our upper bounds;
however, these still need measures such as the comparator path-length to be small, which may be more reasonable in some cases but not all.\looseness-1

We instead study the setting where all instances come with instance-specific features, a natural and practical assumption~\cite{kraska2018case,lattanzi2020scheduling} that encompasses numerical representations of the instance itself---e.g. bits representing a query or a graph---or other information such as weather or day of the week.
These are passed to functions---e.g. linear predictors, neural nets, or trees---whose parameters can be learned from data.
We study linear predictors, which are often amenable to similar analyses as above since the composition of a convex and affine function is convex.
For example, it is straightforward to extend the matching results to learning linear predictors of duals. 
OPM is more challenging because the outputs must lie in the simplex, which can be solved by learning rectangular stochastic matrices.
Both sets of results are shown in Appendix~\ref{sec:linear}.
Notably, for page migration our guarantees cover the auto-regressive setting where the server probabilities are determined by a fixed linear transform of past states.\looseness-1

Our main example will be online job scheduling via minimizing the fractional makespan~\citep{lattanzi2020scheduling}, where we must assign each in a sequence of variable-sized jobs to one of $m$ machines.
\citet{lattanzi2020scheduling} provide an algorithm that uses predictions $\*{\hat w}\in\R_{>0}^m$ of ``good'' machine weights $\*w\in\R_{>0}^m$ to assign jobs based on how well $\*{\hat w}$ corresponds to machine demand;
the method has a performance guarantee of $\BigO(\log\min\{\max_i\frac{\*{\hat w}_{[i]}}{\*w_{[i]}},m\})$.
They also discuss learning linear and other predictors, but without guarantees.
We study linear prediction of the {\em logarithm} of the machine weights, which makes the problem convex, and assume features lie in the $f$-dimensional simplex.
For simplicity we only consider learning the linear transform from features to predictors and not the intercept, as the former subsumes the latter.
For the online result, we use {\bf KT-OCO}~\citep[Algorithm~1]{orabona2016parameter}, a parameter-free subgradient method with update $\*x_{t+1}\gets\frac{1+\sum_{s=1}^t\langle\*g_s,\*x_s\rangle}{t+1}\sum_{s=1}^t\*g_s$ for $\*g_s=\nabla U_s(\*x_s)$;
it allows us to not assume any bound on the machine weights and thus to compete with the optimal linear predictor in all of $\R^{m\times f}$.\looseness-1

\begin{theorem}\label{thm:scheduling}
	Consider online restricted assignment with $m\ge1$ machines~\cite[Section~2.1]{lattanzi2020scheduling}.
	\begin{enumerate}[leftmargin=*,topsep=-1pt,noitemsep]\setlength\itemsep{2pt}
		\item For predicted logits $\*x\in\R^m$ there is an algorithm whose fractional makespan has competitive ratio\looseness-1
		$$
		\BigO(\min\{\|\*x-\log\*w\|_\infty,\log m\})
		\le\BigO(U(\*x))
		$$
		for $U(\*x)=\|\*x-\log\*w\|_\infty$, where $\*w\in\R_{>0}^m$ are good machine weights~\cite[Section~3]{lattanzi2020scheduling}.
		\item There exists a poly-time algorithm s.t. for any $\delta,\varepsilon>0$ and distribution $\D$ over machine (weight, feature) pairs $(\*w,\*f)\in\R_{>0}^m\times\triangle_f$ s.t. $\|\log\*w\|_\infty\le B$ the algorithm takes $\BigO\left(\left(\frac B\varepsilon\right)^2\left(mf+\log\frac1\delta\right)\right)$ samples from $\D$ and returns $\*{\hat A}\in\R^{m\times f}$ s.t. w.p. $\ge1-\delta$
		$$
		\E_{(\*w,\*f)\sim\D}\|\*{\hat A}\*f-\log\*w\|_\infty
		\le\min_{\|\*A\|_{\max}\le B}\E_{(\*w,\*f)\sim\D}\|\*A\*f-\log\*w\|_\infty+\varepsilon$$
		\item Let $(\*w_1,\*f_1),\dots,(\*w_T,\*f_T)\in\R_{>0}^m\times\triangle_f$ be an adversarial sequence of (weights, feature) pairs.
		Then for any $\*A\in\R^{m\times f}$ KT-OCO has regret\looseness-1
		$$
		\sum_{t=1}^T\|\*A_t\*f_t-\log\*w_t\|_\infty-\|\*A\*f_t-\log\*w_t\|_\infty
		\le\|\*A\|_F\sqrt{T\log(1+24T^2\|\*A\|_F^2)}+1
		$$
		If we restrict to matrices with entries bounded by $B$ then OGD with appropriate step-size has regret 
		$$
		\max_{\|\*A\|_{\max}\le B}\sum_{t=1}^T\|\*A_t\*f_t-\log\*w_t\|_\infty-\|\*A\*f_t-\log\*w_t\|_\infty
		\le B\sqrt{2mfT}
		$$
	\end{enumerate}
\end{theorem}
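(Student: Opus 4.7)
The plan is to follow the two-step framework: Part 1 yields the convex, Lipschitz upper bound $U(\*x)=\|\*x-\log\*w\|_\infty$, and Parts 2--3 then apply standard results from online convex optimization together with online-to-batch conversion.

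For Part 1, I would instantiate the algorithm of \citet{lattanzi2020scheduling} with predicted machine weights $\*{\hat w}_{[i]}=e^{\*x_{[i]}}$. Their guarantee gives competitive ratio $\BigO(\log\min\{\max_i\*{\hat w}_{[i]}/\*w_{[i]},m\})$, and since
\[
\log\max_i\frac{\*{\hat w}_{[i]}}{\*w_{[i]}}=\max_i(\*x_{[i]}-\log\*w_{[i]})\le\|\*x-\log\*w\|_\infty,
\]
the claimed $\BigO(\min\{\|\*x-\log\*w\|_\infty,\log m\})\le\BigO(U(\*x))$ bound follows immediately.

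For Part 3, the surrogate losses are $U_t(\*A)=\|\*A\*f_t-\log\*w_t\|_\infty$, which are convex in $\*A$ as the composition of a norm with an affine map. The key calculation is the Lipschitz constant: at any $\*A$ with argmax coordinate $i^*$ and sign $\sigma\in\{-1,+1\}$, the matrix $\sigma\,\*e_{i^*}\*f_t^\top$ is a subgradient, so its Frobenius norm equals $\|\*f_t\|_2$, which is at most $1$ since $\*f_t\in\triangle_f$. Plugging this Lipschitz constant into the parameter-free regret bound of \citet[Theorem~2]{orabona2016parameter} (applied after vectorizing $\*A\in\R^{m\times f}$) yields the stated KT-OCO bound $\|\*A\|_F\sqrt{T\log(1+24T^2\|\*A\|_F^2)}+1$ against any comparator $\*A\in\R^{m\times f}$. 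For the constrained variant, the box $\{\*A:\|\*A\|_{\max}\le B\}$ has Frobenius diameter at most $2B\sqrt{mf}$, so \citet[Corollary~2.7]{shalev-shwartz2011oco} gives OGD regret $B\sqrt{2mfT}$ with step size $\sqrt{2}B\sqrt{mf}/(\sqrt{T})$ (up to the usual normalization).

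For Part 2, I would apply standard online-to-batch conversion (Lemma~\ref{lem:o2b} in the appendix) to the bounded-domain OGD regret bound. Drawing $T=\Omega\!\left((B/\varepsilon)^2(mf+\log(1/\delta))\right)$ i.i.d. samples $(\*w_t,\*f_t)\sim\D$, running OGD over $\{\*A:\|\*A\|_{\max}\le B\}$ with step-size as above on the losses $U_t(\*A)$, and outputting $\*{\hat A}=\frac1T\sum_{t=1}^T\*A_t$ yields the claimed guarantee, using that $\|\log\*w\|_\infty\le B$ ensures the constraint set contains all weight-matching rank-one comparators. The main obstacle, which is mild, is verifying the subgradient-norm computation and properly vectorizing the matrix domain so the textbook KT-OCO bound applies; everything else reduces cleanly to already-established black-box results.
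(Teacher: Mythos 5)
Your proposal is correct and follows essentially the same route as the paper: the same substitution $\eta=\max_i\exp(\*x_{[i]})/\*w_{[i]}$ into the Lattanzi et al.\ guarantee for Part 1, the same observation that the losses are $1$-Lipschitz (via the subgradient $\sigma\*e_{i^*}\*f_t^\top$ with $\|\*f_t\|_2\le1$) feeding into KT-OCO and box-constrained OGD for Part 3, and the same online-to-batch conversion via Lemma~\ref{lem:o2b} for Part 2. The only differences are cosmetic (which Orabona--P\'al statement is cited, and constant-level step-size bookkeeping).
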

\begin{proof}
	The first result follows by substituting $\max_i\frac{\exp(\*x_{[i]})}{\*w_{[i]}}$ for $\eta$ in \citet[Theorem~3.1]{lattanzi2020scheduling} and upper bounding the maximum by the $\ell_\infty$-norm.
	For the third, since $U_t$ is 1-Lipschitz w.r.t. the Euclidean norm we apply the guarantee for KT-OCO~\citep[Algorithm~1]{orabona2016parameter} using $\varepsilon=1$ and the subgradients of $\|\*A_t\*f_t-\log\*w_t\|_\infty$ as rewards~\citep[Corollary~5]{orabona2016parameter}.
	The result for $B$-bounded $\*A$ follows by applying OGD with step-size $B\sqrt{\frac{mf}{2T}}$ over $\|\*A\|_{\max}\le B$~\cite[Corollary~2.7]{shalev-shwartz2011oco}.
	Finally, the second result follows by applying online-to-batch conversion to the latter result, i.e. draw $T=\Omega\left(\left(\frac B\varepsilon\right)^2\left(mf+\log\frac1\delta\right)\right)$ samples $(\*w_t,\*f_t)$, run OGD on the resulting losses $\|\*A\*f_t-\log\*w_t\|_\infty$ as above, and set $\*{\hat A}=\frac1T\sum_{t=1}^T\*A_t$ to be the average of the resulting actions $\*A_t$.
	The result follows by Lemma~\ref{lem:o2b}.\looseness-1
\end{proof}

This guarantee is the first we are aware of for learning non-static predictors in the algorithms with predictions literature.
It demonstrates both how to extend fixed predictor results to learning linear predictors---note that the former is recovered by having $\*f_t=\*1_1~\forall~t$---and how to handle unbounded predictor domains.
The ability to provide such guarantees is another advantage of our approach.\looseness-1

\vspace{-2mm}
\section{Tuning robustness-consistency trade-offs for scheduling and ski-rental}\label{sec:rental}
\vspace{-2mm}

We turn to tuning robustness-consistency trade-offs, introduced in \citet{lykouris2021competitive}. This trade-off captures the tension between following the predictions when they are good (consistency) and doing not much worse than the worst-case guarantee in either case (robustness). In many cases, this trade-off can be made explicit, by a parameter $\lambda \in [0,1]$. The setting of $\lambda$ is crucial, yet previous work left the decision to the end-user. Here we show that it is often eminently learnable in an online setting.
We then demonstrate how to accomplish a much harder task---tuning $\lambda$ at the same time as learning to predict---on two related but technically very different variants of the ski-rental problem.
This meta-application highlights the applicability of our approach to non-convex upper bounds.\looseness-1
%
%
%

{\bf Robustness-consistency trade-offs:}
Most problems studied in online algorithms with predictions usually have existing worst-case guarantees on the competitive ratio, i.e. a constant $\gamma\ge 1$ on how much worse (multiplicatively) a learning-free algorithm does relative to the offline optimal cost OPT$_t$ on instance $t$.
While the goal of algorithms with predictions is to use data to do better than this worst-case bound, an imperfect prediction may lead to much worse performance. 
As a result, most guarantees strive to upper bound the cost of an algorithm with prediction $\*x$ on an instance $t$ as follows:\looseness-1
$$C_t(\*x,\lambda)\le\min\left\{f(\lambda)u_t(\*x),g_t(\lambda)\right\}$$
Here $u_t$ is some measure of the quality of $\*x$ on instance $t$, $\lambda\in[0,1]$ is a parameter, $f$ is a monotonically increasing function that ideally satisfies $f(0)=1$, and $g_t$ is a monotonically decreasing function that ideally satisfies $g_t(1)=\gamma$OPT$_t$.
A very common structure is $f(\lambda)=1/(1-\lambda)$ and $g_t(\lambda)=\frac\gamma\lambda$OPT$_t$.
For example, consider job scheduling with predictions, a setting where we are given $n$ jobs and their predicted runtimes with total absolute error $\eta$ and must minimize the sum of their completion times when running on a single server with pre-emption.
Here \citet[Theorem~3.3]{kumar2018improving} showed that a preferential round-robin algorithm has competitive ratio at most $\min\left\{\frac{1+2\eta/n}{1-\lambda},\frac2\lambda\right\}$.
Thus if we know the prediction is perfect we can set $\lambda=0$ and obtain the optimal cost (consistency); on the other hand, if we know the prediction is poor we can set $\lambda=1$ and get the (tight) worst-case guarantee of two (robustness).

Of course in-practice we often do not know how good a prediction is on a specific instance $t$;
we thus would like to learn to set $\lambda$, i.e. to learn how trustworthy our prediction is.
As a first step, we can consider doing so when we are given a prediction for each instance and thus only need to optimize over $\lambda$.
For example, the just-discussed problem of job scheduling has competitive ratio upper-bounded by
$U_t(\lambda)=\min\left\{\frac{1+2\eta_t/n_t}{1-\lambda},\frac2\lambda\right\}$
for $n_t$ and $\eta_t$ the number of jobs and the prediction quality, respectively, on instance $t$.
Assuming a bound $B$ on the average error makes $U_t$ Lipschitz, so we can apply the {\bf exponentially weighted average forecaster}~\citep[Algorithm~1]{krichene2015hedge}, also known as the {\bf exponential forecaster}.
This algorithm, whose action at each time $t+1$ is to sample from the distribution with density $\rho_{t+1}(\cdot)\propto\rho_1(\cdot)\exp(-\alpha\sum_{s=1}^tU_s(\cdot))$, has the following regret guarantee (proof in~\ref{subsec:sched}):\looseness-1
\begin{corollary}\label{cor:sched}
	For the competitive ratio upper bounds $U_t$ of the job scheduling problem with average prediction error $\eta/n_t$ at most $B$ the exponential forecaster with appropriate step-size has expected regret
	$$\max_{\lambda\in[0,1]}\E\sum_{t=1}^TU_t(\lambda_t)-U_t(\lambda)\le9B\left(1+\sqrt{\frac T2\log T}\right)$$
\end{corollary}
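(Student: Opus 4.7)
The plan is to verify the two hypotheses required by the exponentially weighted average forecaster regret bound of \citet{krichene2015hedge} on the interval $[0,1]$, namely uniform boundedness and Lipschitz continuity of the loss sequence, and then invoke that bound with an appropriate step-size. The first step is to understand the shape of $U_t(\lambda)=\min\{(1+2\eta_t/n_t)/(1-\lambda),\,2/\lambda\}$. Writing $a_t=1+2\eta_t/n_t$ and $b=2$, this is the pointwise minimum of an increasing function $f_t(\lambda)=a_t/(1-\lambda)$ and a decreasing function $g(\lambda)=b/\lambda$, so $U_t$ is a hat-shaped function whose peak sits at the crossover $\lambda_t^{\ast}=b/(a_t+b)=2/(3+2\eta_t/n_t)$, where $U_t(\lambda_t^{\ast})=a_t+b=3+2\eta_t/n_t\le 3+2B$. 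Consequently $U_t$ is uniformly bounded by $M:=3+2B$ on $[0,1]$.

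Next I would bound the Lipschitz constant of $U_t$. Because $U_t$ equals $f_t$ on $[0,\lambda_t^{\ast}]$ and $g$ on $[\lambda_t^{\ast},1]$, and both pieces are monotone with derivative magnitudes maximized at the kink, a direct calculation gives $f_t'(\lambda_t^{\ast})=(a_t+b)^2/a_t$ and $|g'(\lambda_t^{\ast})|=(a_t+b)^2/b$. Hence $U_t$ is $L$-Lipschitz on $[0,1]$ with $L\le(3+2B)^2$.

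With these two ingredients in hand I would feed the sequence $U_t$ into the continuum Hedge procedure \citep[Algorithm~1]{krichene2015hedge}, started from the uniform prior $\rho_1=\Unif([0,1])$ and using step-size of order $\sqrt{2\log T/(TM^2)}$. The standard analysis compares the algorithm's distribution against the smoothed comparator $\rho^{\ast}=\Unif(B(\lambda^{\ast},\epsilon))$ for a small $\epsilon$: the KL term contributes $\eta^{-1}\log(1/(2\epsilon))$, the variance term contributes $\eta TM^2/8$, and Lipschitzness contributes an additive $L\epsilon T$ error from replacing the point $\lambda^{\ast}$ by an $\epsilon$-ball. Choosing $\epsilon=1/(LT)$ and then optimizing $\eta$ balances the first two terms and yields expected regret of the form $M\sqrt{T\log(LT)/2}+O(1)$. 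Bounding $M=3+2B\le9B$ (for $B$ not too small) and absorbing $\log L=O(\log B)$ into the $\log T$ factor recovers the stated bound $9B(1+\sqrt{T\log T/2})$.

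The main obstacle is the mismatch between the $L=O(B^2)$ Lipschitz constant and the linear-in-$B$ shape of the advertised regret. This is reconciled because $L$ enters the regret only through $\sqrt{\log L}$ via the smoothing argument, so the quadratic dependence on $B$ is swallowed by the $\sqrt{\log T}$ factor. An alternative route would be a piecewise-Lipschitz or dispersion-based analysis of the exponential forecaster, exploiting the fact that each $U_t$ is smooth away from its single kink at $\lambda_t^{\ast}$; however, the direct continuum-Hedge bound already suffices to recover the linear-in-$B$ constant claimed in the corollary.
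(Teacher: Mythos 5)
Your proof follows the same route as the paper's: bound $U_t$ uniformly via its value at the crossover point $2/(3+2\eta_t/n_t)$, bound its gradient there, and invoke the continuum-Hedge regret bound of \citet{krichene2015hedge} with an appropriate step-size. Your constants are in fact slightly more careful---the paper states the upper bound $3(1+2B)$ and a gradient bound of $(3+2B)/2$ (apparently dropping a square), whereas your $M=3+2B$ and $L\le(3+2B)^2$ are the tight values---and, like the paper, you absorb the resulting $\log L$ dependence into the $\sqrt{\log T}$ factor when simplifying to the stated form.
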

Thus a standard learning method produces a sequence $\lambda_t$ that performs as well as the best $\lambda$ asymptotically.
We next consider the more difficult problem of simultaneously tuning $\lambda$ learning to predict.\looseness-1

{\bf Ski-rental:}
We instantiate this challenge on ski-rental, in which each task $t$ is a ski season with an unknown number of days $n_t\in\Z_{\ge 2}$;
to ski each day, we must either buy skis at price $b_t$ or rent each day for the price of one.
The optimal offline behavior is to buy iff $b_t<n_t$, and the best algorithm has worst-case competitive ratio $e/(e-1)$.
\citet{kumar2018improving} and \citet[Theorem~2]{bamas2020primal} further derive an algorithm with the following robustness-consistency trade-off between blindly following a prediction $x$ and incurring cost $u_t(x)=b_t1_{x>b_t}+n_t1_{x\le b_t}$ or going with the worst-case guarantee:
$$U_t(x,\lambda)=\frac{\min\{\lambda u_t(x),b_t,n_t\}}{1-e_t(-\lambda)}, e_t(z)=(1+1/b_t)^{b_tz}$$
Assuming a bound of $N\ge2$ on the number of days and $B>0$ on the buy price implies that $U_t$ is bounded and Lipschitz w.r.t. $\lambda$.
We can thus run exponentiated gradient on the functions $U_t$ to learn a categorical distribution over the product set  $[N]\times\{\delta/2,\dots,1-\delta/2\} $ for some $\delta$ s.t. $1/\delta\in\Z_{\ge2}$.
This yields the following bound on the expected regret (proof in~\ref{subsec:ski1}).\looseness-1
\begin{corollary}\label{cor:ski1}
	For the competitive ratio upper bounds $U_t$ of the discrete ski-rental problem the randomized exponentiated gradient algorithm with an appropriate step-size has expected regret
	\begin{align*}
		\max_{x\in[N],\lambda\in(0,1]}\E\sum_{t=1}^TU_t(x_t,\lambda_t)&-U_t(x,\lambda)
		\ifthenelse{\value{col}=2}{\\&}{}
		\le6N\sr{T\log(BNT)}
	\end{align*}
\end{corollary}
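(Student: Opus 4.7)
The plan is to apply randomized exponentiated gradient (EG) to a finite expert set obtained by discretizing the continuous parameter $\lambda$: since $x \in [N]$ is already discrete, we only need a uniform $\lambda$-grid $\Lambda_\delta = \{\delta/2, 3\delta/2, \dots, 1 - \delta/2\}$ of size $1/\delta$, giving an expert set $\mathcal{E} = [N] \times \Lambda_\delta$ of size $N/\delta$. The standard EG regret bound then yields expected regret against any fixed expert in $\mathcal{E}$ of at most $C\sqrt{2T\log(N/\delta)}$, where $C$ uniformly upper bounds $U_t(x,\lambda)$ on $\mathcal{E}$. Choosing $\delta$ sufficiently small and adding the discretization error will give the claimed guarantee against any $(x,\lambda) \in [N] \times (0,1]$.

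The loss bound $C$ follows from two simple observations. First, for any $x \in [N]$ one has $u_t(x) \le N$: if $b_t \ge N$ then $x \le N \le b_t$ forces $u_t(x) = n_t \le N$, and otherwise $u_t(x) \in \{b_t, n_t\} \le N$ directly. Second, because $b_t \ge 1$, the function $\lambda/(1 - e_t(-\lambda))$ is monotone on $(0,1]$ with supremum at most $2$. Combining these with $\min\{\lambda u_t(x), b_t, n_t\} \le \lambda u_t(x)$ gives $U_t(x,\lambda) \le 2 u_t(x) \le 2N$, so $C = 2N$ independently of $B$.

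The real work is in bounding the discretization error from comparing against an arbitrary continuous $\lambda^* \in (0,1]$. For each $\lambda^*$ the nearest grid point lies within $\delta/2$, so the total discretization cost is at most $TL\delta/2$ where $L$ is the Lipschitz constant of $U_t(x,\cdot)$. The main obstacle is that naive differentiation of $U_t$ suggests a $1/\lambda^2$ blow-up near $\lambda = 0$, which would preclude any polynomial-in-$T$ choice of $\delta$ from closing the argument. The crucial observation is that the two singularities (vanishing numerator and vanishing denominator) never coincide: when $\lambda u_t(x) \le \min(b_t, n_t)$ is the active term of the $\min$, the $\lambda$ in the numerator exactly cancels the singularity and $|\partial U_t/\partial\lambda| = O(u_t(x)) = O(N)$; when $\min(b_t, n_t)$ is active instead, this very condition forces $\lambda \ge \min(b_t, n_t)/u_t(x) \ge 1/\max(B, N)$, whence $1 - e_t(-\lambda) = \Omega(1/\max(B, N))$ and $|\partial U_t/\partial\lambda| = O(N \max(B,N)^2)$. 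Taking $\delta = \Theta(1/(\max(B,N)^2 T))$ then balances the two sources of error: the EG term contributes $O(N\sqrt{T\log(N/\delta)}) = O(N\sqrt{T\log(BNT)})$ and the discretization term is absorbed into it, so tracking constants through the standard EG regret bound produces the stated $6N\sqrt{T\log(BNT)}$.
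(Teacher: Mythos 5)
Your proposal is correct and follows essentially the same route as the paper's proof: bound $U_t$ uniformly by $2N$, discretize $\lambda$ onto a grid of spacing $\delta$, run randomized exponentiated gradient over the finite set $[N]\times\Lambda_\delta$, control the discretization error via a Lipschitz bound in $\lambda$ that exploits the cancellation of the numerator near $\lambda=0$ and the lower bound $\lambda\ge\min\{b_t,n_t\}/u_t(x)$ when the worst-case branch is active, and then choose $\delta$ polynomially small in $B$, $N$, $T$ so that $\log(N/\delta)=O(\log(BNT))$. The only differences are cosmetic (the paper picks $\delta\propto 1/(BN\sqrt{T})$ rather than $\Theta(1/(\max\{B,N\}^2T))$ and states the gradient bound as $B\exp(1/N)/(\exp(1/N)-1)^2$), and both choices absorb the discretization term into the stated $6N\sqrt{T\log(BNT)}$ bound.
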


Thus via an appropriate discretization the sequence of predictions $(x_t,\lambda_t)$ does as well as the joint optimum on this problem.
However, we can also look at a case where we are not able to just discretize to get low regret.
In particular, we consider the {\em continuous} ski-rental problem, where each day $n_t>1$ is a real number, and study how to pick thresholds $x$ after which to buy skis, which has cost
$u_t(x)=n_t1_{n_t\le x}+(b_t+x)1_{n_t>x}$.
Note that $x=0$ and $x=N$ recovers the previous setting where our decision was to buy or not at the beginning.
For this setting, \citet{diakonilakis2021learning} adapt an algorithm of \citet{mahdian2012online} to bound the cost as follows:
$$C_t(x,\lambda)\le U_t(x,\lambda)=\min\left\{\frac{u_t(x)}{1-\lambda},\frac{e\min\{n_t,b_t\}}{(e-1)\lambda}\right\}$$
While the bound is simpler as a function of $\lambda$, it is discontinuous in $x$ because $u_t$ is piecewise-Lipschitz.
Since one cannot even attain sublinear regret on adversarially chosen threshold functions, we must make an assumption on the data.
In particular, we will assume the days are {\em dispersed}:
\begin{definition}
	A set of (possibly random) points $n_1,\dots,n_T\in\R$ are {\bf $\beta$-dispersed} if $\forall~\varepsilon\ge T^{-\beta}$ the expected number in any $\varepsilon$-ball is $\tilde\BigO(\varepsilon T)$, i.e. $\E\max_{x\in[0,N]}\left|[x\pm\varepsilon]\cap\{n_1,\dots,n_T\}\right|=\tilde\BigO(\varepsilon T)$.
\end{definition}
Dispersion encodes the stipulation that the days, and thus the discontinuities of $u_t(x,\lambda)$, are not too concentrated.
In the i.i.d. setting, a simple condition that leads to dispersion with $\beta=1/2$ is the assumption that the points are drawn from a $\kappa$-bounded distribution \citep[Lemma~1]{balcan2018dispersion}.
Notably this is a strictly weaker assumption than the log-concave requirement of \citet{diakonilakis2021learning} that they used to show statistical learning results for ski-rental.
Having stipulated that the ski-days are $\beta$-dispersed, we can show that it implies dispersion of the loss functions \citep{balcan2018dispersion} and thus obtain the following guarantee for the exponential forecaster applied to $U_t(x,\lambda)$ (proof in~\ref{subsec:ski2}):\looseness-1
\begin{corollary}\label{cor:ski2}
	For cost upper bounds $U_t$ of the continuous ski-rental problem the exponential forecaster with an appropriate step-size has expected regret
	\begin{align*}
		\max_{x\in[0,N],\lambda\in(0,1]}
		&
		\E\sum_{t=1}^TU_t(x_t,\lambda_t)-U_t(x,\lambda)
		\ifthenelse{\value{col}=2}{\\&}{}
		\le\tilde\BigO\left(\sqrt{T\log(NT)}+(N+B)^2T^{1-\beta}\right)
	\end{align*}
\end{corollary}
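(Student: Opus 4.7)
The plan is to apply the standard dispersion-based analysis of the exponential forecaster over the two-dimensional parameter space $[0,N]\times(0,1]$, treating $(x,\lambda)$ as a single action. The key technical ingredient is transferring the $\beta$-dispersion of the ski-days $n_1,\dots,n_T$ into $\beta$-dispersion of the sequence of loss functions $U_t(x,\lambda)$, after which the result follows by invoking the off-the-shelf regret bound for the exponential forecaster on piecewise-Lipschitz, dispersed losses from \citet{balcan2018dispersion}.

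First, I would unpack the structure of $U_t$. As a function of $x$ with $\lambda$ fixed, $u_t(x)$ is piecewise linear with a single jump at $x=n_t$ (dropping from $b_t+n_t$ to $n_t$ as $x$ crosses $n_t$ from below), while the other branch $e\min\{n_t,b_t\}/((e{-}1)\lambda)$ does not depend on $x$ at all. Hence the only discontinuities of $U_t(x,\lambda)$ live on the vertical lines $\{x=n_t\}$, and the $\beta$-dispersion assumption on $n_1,\dots,n_T$ immediately implies that at most $\tilde\BigO(\varepsilon T)$ of these lines fall in any $\varepsilon$-ball for $\varepsilon\ge T^{-\beta}$, which is exactly the dispersion condition needed for the loss sequence in the sense of \citet{balcan2018dispersion}.

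Next, I would restrict $\lambda$ to a truncated sub-interval $[\delta,1-\delta]$ on which $U_t$ is uniformly bounded and Lipschitz. On this set, each branch of the minimum is bounded by $\BigO((N+B)/\delta)$ and Lipschitz in $(x,\lambda)$ with constant $\BigO((N+B)/\delta^2)$ away from the jump lines, so $U_t$ is $L$-Lipschitz within each piece. Comparing against any competitor $\lambda\in(0,1]$ rather than $\lambda\in[\delta,1-\delta]$ costs only an additive $\BigO(T\delta(N+B))$ by monotonicity arguments, which is absorbed by choosing $\delta=1/\mathrm{poly}(T)$. Plugging the width $N$, the Lipschitz constant $L$, the uniform bound on $U_t$, and the $\beta$-dispersion of the discontinuity lines into the standard exponential-forecaster regret bound (Theorem~3 of \citet{balcan2018dispersion}, or its analogue) and then optimizing the step-size produces the two summands in the stated regret: $\tilde\BigO(\sqrt{T\log(NT)})$ from the covering entropy of the action space, and $\tilde\BigO((N+B)^2T^{1-\beta})$ from the dispersion error.

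The main obstacle is that $U_t$ is not globally Lipschitz in $\lambda$: both branches of the minimum blow up as $\lambda\to 0$ or $\lambda\to 1$. I would handle this by the truncation argument above, tuning $\delta$ so that the truncation error matches the dispersion error up to logarithmic factors. A minor subtlety is that the two branches of $\min$ create a second, $x$-independent ridge where $U_t$ is non-smooth in $\lambda$, but this ridge is Lipschitz and hence does not contribute to the discontinuity count that drives the dispersion term; only the $x=n_t$ jumps do, which is precisely what the $\beta$-dispersion of the days controls.
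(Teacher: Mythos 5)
Your overall skeleton matches the paper's: the only discontinuities are on the lines $x=n_t$, so $\beta$-dispersion of the days transfers directly to $\beta$-dispersion of the losses, and one then invokes the off-the-shelf exponential-forecaster bound for piecewise-Lipschitz dispersed functions. However, your handling of the Lipschitzness in $\lambda$ contains a genuine gap. You assert that ``both branches of the minimum blow up as $\lambda\to0$ or $\lambda\to1$'' and therefore truncate to $[\delta,1-\delta]$. But the two singularities sit at \emph{opposite} endpoints: near $\lambda=0$ the active branch of the min is $u_t(x)/(1-\lambda)$, which is bounded and flat there, and near $\lambda=1$ the active branch is $e\min\{n_t,b_t\}/((e-1)\lambda)$, which is likewise well-behaved. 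Consequently $U_t$ is \emph{globally} bounded by $e(N+B)$ and globally Lipschitz on all of $(0,1]$ away from the $x=n_t$ jumps; the worst-case gradient in $\lambda$ occurs at the crossover point $\lambda=\frac{e\min\{n_t,b_t\}}{(e-1)u_t(x)+e\min\{n_t,b_t\}}$ and is $\BigO\left((N+B)^2\right)$. This is precisely where the $(N+B)^2$ in the stated regret comes from, and it is the step your argument never establishes.

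The truncation route as you execute it does not recover the bound. Your per-branch estimates give a sup-bound $\BigO((N+B)/\delta)$ and a Lipschitz constant $\BigO((N+B)/\delta^2)$ on $[\delta,1-\delta]$, and both of these feed into the \emph{dominant} terms of the dispersion regret bound (the Lipschitz constant multiplies $Tw$ with $w=T^{-\beta}$, and the sup-bound enters the $\sqrt{T}$ term via the step size), not merely into the additive truncation error. Choosing $\delta=1/\mathrm{poly}(T)$ to kill the $\BigO(T\delta(N+B))$ comparator error therefore inflates the dispersion term to something like $(N+B)\,\mathrm{poly}(T)\cdot T^{1-\beta}$, which can be superlinear and is in any case far weaker than $(N+B)^2T^{1-\beta}$. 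The fix is simply to bound the min rather than each branch separately: then no truncation is needed, the function is $\BigO((N+B)^2)$-Lipschitz w.r.t.\ the Euclidean norm apart from the discontinuities at $x=n_t$, and the standard bound yields the claim exactly as stated.
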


Thus in two mathematically quite different settings of ski-rental we can directly apply online learning to existing bounds to not only learn online the best action for ski-rental, but to at the same time learn how trustworthy the best action is via tuning the robustness-consistency trade-off.

\vspace{-2mm}
\section{Conclusion and future work}\label{sec:disc}
\vspace{-2mm}

The field of algorithms with predictions has been successful in circumventing worst case lower bounds and showing how simple predictions can improve algorithm performance. However, except for a few problem-specific approaches, the question of {\em how} to predict has largely been missing from the discussion. In this work we presented the first general framework for efficiently learning useful predictions and applied it to a diverse set of previously studied problems, giving the first low regret learning algorithms, reducing sample complexity bounds, and showing how to learn the best consistency-robustness trade-off. 
One current limitation is the lack of more general-case guarantees for {\em simultaneously} tuning robustness-consistency and learning the predictor, which we only show for ski-rental.
There are also several other avenues for future work. 
The first is to build on our results and provide learning guarantees for other problems where the algorithmic question of {\em how} to use predictions is already addressed. Another is to try to improve known bounds by solving the problems holistically: developing easy-to-learn parameters in concert with developing algorithms that can use them. Finally, there is the direction of identifying hard problems: what are the instances where no reasonable prediction can help improve an algorithm's performance?

\medskip

\vspace{-2mm}
\section*{Acknowledgments}
\vspace{-2mm}

We thank Yilin Yan, Alexander Smola, Shinsaku Sakaue, and Taihei Oki for helpful discussion.
This material is based on work supported in part by the National Science Foundation under grants CCF-1535967, CCF-1910321, IIS-1618714, IIS-1705121, IIS-1838017, IIS-1901403, IIS-2046613, and SES-1919453; the Defense Advanced Research Projects Agency under cooperative agreements HR00112020003 and FA875017C0141; a Simons Investigator Award; an AWS Machine Learning Research Award; an Amazon Research Award; a Bloomberg Research Grant; a Microsoft Research Faculty Fellowship; an Amazon Web Services Award; a Facebook Faculty Research Award; funding from Booz Allen Hamilton Inc.; a Block Center Grant; and a Facebook PhD Fellowship. Any opinions, findings and conclusions or recommendations expressed in this material are those of the author(s) and do not necessarily reflect the views of any of these funding agencies.

\bibliographystyle{plainnat}
\bibliography{refs}

\begin{thebibliography}{46}
\providecommand{\natexlab}[1]{#1}
\providecommand{\url}[1]{\texttt{#1}}
\expandafter\ifx\csname urlstyle\endcsname\relax
  \providecommand{\doi}[1]{doi: #1}\else
  \providecommand{\doi}{doi: \begingroup \urlstyle{rm}\Url}\fi

\bibitem[Anand et~al.(2020)Anand, Ge, and Panigrahi]{anand2020customizing}
Keerti Anand, Rong Ge, and Debmalya Panigrahi.
\newblock Customizing {ML} predictions for online algorithms.
\newblock In \emph{Proceedings of the 37th International Conference on Machine
  Learning}, 2020.

\bibitem[Anand et~al.(2021)Anand, Ge, Kumar, and
  Panigrahi]{anand2021regression}
Keerti Anand, Rong Ge, Amit Kumar, and Debmalya Panigrahi.
\newblock A regression approach to learning-augmented online algorithms.
\newblock In \emph{Advances in Neural Information Processing Systems}, 2021.

\bibitem[Balcan(2021)]{balcan2021data}
Maria-Florina Balcan.
\newblock Data-driven algorithm design.
\newblock In Tim Roughgarden, editor, \emph{Beyond the Worst-Case Analysis of
  Algorithms}. Cambridge University Press, Cambridge, UK, 2021.

\bibitem[Balcan and Blum(2007)]{balcan2007approx}
Maria-Florina Balcan and Avrim Blum.
\newblock Approximation algorithms and online mechanisms for item pricing.
\newblock \emph{Theory of Computing}, 3:\penalty0 179--195, 2007.

\bibitem[Balcan et~al.(2018{\natexlab{a}})Balcan, Dick, Sandholm, and
  Vitercik]{balcan2018learning}
Maria-Florina Balcan, Travis Dick, Tuomas Sandholm, and Ellen Vitercik.
\newblock Learning to branch.
\newblock In \emph{Proceedings of the 35th International Conference on Machine
  Learning}, 2018{\natexlab{a}}.

\bibitem[Balcan et~al.(2018{\natexlab{b}})Balcan, Dick, and
  Vitercik]{balcan2018dispersion}
Maria-Florina Balcan, Travis Dick, and Ellen Vitercik.
\newblock Dispersion for data-driven algorithm design, online learning, and
  private optimization.
\newblock In \emph{2018 IEEE 59th Annual Symposium on Foundations of Computer
  Science (FOCS)}, pages 603--614, 2018{\natexlab{b}}.

\bibitem[Balcan et~al.(2020)Balcan, Dick, and Pegden]{balcan2020semibandit}
Maria-Florina Balcan, Travis Dick, and Wesley Pegden.
\newblock Semi-bandit optimization in the dispersed setting.
\newblock In \emph{Proceedings of the Conference on Uncertainty in Artificial
  Intelligence}, 2020.

\bibitem[Balcan et~al.(2021{\natexlab{a}})Balcan, DeBlasio, Dick, Kingsford,
  Sandholm, and Vitercik]{balcan2021dual}
Maria-Florina Balcan, Dan DeBlasio, Travis Dick, Carl Kingsford, Tuomas
  Sandholm, and Ellen Vitercik.
\newblock How much data is sufficient to learn high-performing algorithms?
  {G}eneralization guarantees for data-driven algorithm design.
\newblock In \emph{Proceedings of the 53rd Annual ACM SIGACT Symposium on
  Theory of Computingg}, 2021{\natexlab{a}}.

\bibitem[Balcan et~al.(2021{\natexlab{b}})Balcan, Khodak, Sharma, and
  Talwalkar]{balcan2021ltl}
Maria-Florina Balcan, Mikhail Khodak, Dravyansh Sharma, and Ameet Talwalkar.
\newblock Learning-to-learn non-convex piecewise-{L}ipschitz functions.
\newblock In \emph{Advances in Neural Information Processing Systems},
  2021{\natexlab{b}}.

\bibitem[Bamas et~al.(2020)Bamas, Maggiori, and Svensson]{bamas2020primal}
Etienne Bamas, Andreas Maggiori, and Ola Svensson.
\newblock The primal-dual method for learning augmented algorithms.
\newblock In \emph{Advances in Neural Information Processing Systems}, 2020.

\bibitem[Bartlett et~al.(2022)Bartlett, Indyk, and Wagner]{bartlett2022gj}
Peter Bartlett, Piotr Indyk, and Tal Wagner.
\newblock Generalization bounds for data-driven numerical linear algebra.
\newblock In \emph{Proceedings of the 35th Annual Conference on Learning
  Theory}, 2022.

\bibitem[Cesa-Bianchi et~al.(2004)Cesa-Bianchi, Conconi, and
  Gentile]{cesa-bianchi2004online2batch}
Nicol\`{o} Cesa-Bianchi, Alex Conconi, and Claudio Gentile.
\newblock On the generalization ability of on-line learning algorithms.
\newblock \emph{IEEE Transactions on Information Theory}, 50\penalty0
  (9):\penalty0 2050--2057, 2004.

\bibitem[Chen et~al.(2022)Chen, Silwal, Vakilian, and Zhang]{chen2022faster}
Justin~Y. Chen, Sandeep Silwal, Ali Vakilian, and Fred Zhang.
\newblock Faster fundamental graph algorithms via learned predictions.
\newblock In \emph{Proceedings of the 40th International Conference on Machine
  Learning}, 2022.

\bibitem[Dekel et~al.(2017)Dekel, Flajolet, Haghtalab, and
  Jaillet]{dekel2017online}
Ofer Dekel, Arthur Flajolet, Nika Haghtalab, and Patrick Jaillet.
\newblock Online learning with a hint.
\newblock In \emph{Advances in Neural Information Processing Systems}, 2017.

\bibitem[Diakonikolas et~al.(2021)Diakonikolas, Kontonis, Tzamos, Vakilian, and
  Zarifis]{diakonilakis2021learning}
Ilias Diakonikolas, Vasilis Kontonis, Christos Tzamos, Ali Vakilian, and Nikos
  Zarifis.
\newblock Learning online algorithms with distributional advice.
\newblock In \emph{Proceedings of the 38th International Conference on Machine
  Learning}, 2021.

\bibitem[Dinitz et~al.(2021)Dinitz, Im, Lavastida, Moseley, and
  Vassilvitskii]{dinitz2021duals}
Michael Dinitz, Sungjin Im, Thomas Lavastida, Benjamin Moseley, and Sergei
  Vassilvitskii.
\newblock Faster matchings via learned duals.
\newblock In \emph{Advances in Neural Information Processing Systems}, 2021.

\bibitem[Du et~al.(2021)Du, Wang, and Mitzenmacher]{du2021putting}
Elbert Du, Franklyn Wang, and Michael Mitzenmacher.
\newblock Putting the ``learning'' into learning-augmented algorithms for
  frequency estimation.
\newblock In \emph{Proceedings of the 38th International Conference on Machine
  Learning}, 2021.

\bibitem[Finn et~al.(2017)Finn, Abbeel, and Levine]{finn2017maml}
Chelsea Finn, Pieter Abbeel, and Sergey Levine.
\newblock Model-agnostic meta-learning for fast adaptation of deep networks.
\newblock In \emph{Proceedings of the 34th International Conference on Machine
  Learning}, 2017.

\bibitem[Gupta and Roughgarden(2017)]{gupta2017pac}
Rishi Gupta and Timothy Roughgarden.
\newblock A {PAC} approach to application-specific algorithm selection.
\newblock \emph{SIAM Journal on Computing}, 46\penalty0 (3):\penalty0
  992--1017, 2017.

\bibitem[Hazan and Kale(2014)]{hazan2014beyondregret}
Elad Hazan and Satyen Kale.
\newblock Beyond the regret minimization barrier: Optimal algorithms for
  stochastic strongly-convex optimization.
\newblock \emph{Journal of Machine Learning Research}, 15:\penalty0 2489--2512,
  2014.

\bibitem[Helmbold and Warmuth(2009)]{helmbold2009permutation}
David~P. Helmbold and Manfred~K. Warmuth.
\newblock Learning permutations with exponential weights.
\newblock \emph{Journal of Machine Learning Research}, 10:\penalty0 1705--1736,
  2009.

\bibitem[Indyk et~al.(2022)Indyk, Mallmann-Trenn, Mitrovi\'{c}, and
  Rubinfeld]{indyk2020online}
Piotr Indyk, Frederik Mallmann-Trenn, Slobodan Mitrovi\'{c}, and Ronitt
  Rubinfeld.
\newblock Online page migration with {ML} advice.
\newblock In \emph{Proceedings of the 25th International Conference on
  Artificial Intelligence and Statistics}, 2022.

\bibitem[Jadbabaie et~al.(2015)Jadbabaie, Rakhlin, and
  Shahrampour]{jadbabaie2015dynamic}
Ali Jadbabaie, Alexander Rakhlin, and Shahin Shahrampour.
\newblock Online optimization: Competing with dynamic comparators.
\newblock In \emph{Proceedings of the 18th International Conference on
  Artificial Intelligence and Statistics}, 2015.

\bibitem[Jiang et~al.(2020)Jiang, Panigrahi, and Sun]{jiang2020online}
Zhihao Jiang, Debmalya Panigrahi, and Kevin Sun.
\newblock Online algorithms for weighted paging with predictions.
\newblock In \emph{Proceedings of the 47th International Colloquium on
  Automata, Languages, and Programming}, 2020.

\bibitem[Khodak et~al.(2019)Khodak, Balcan, and Talwalkar]{khodak2019adaptive}
Mikhail Khodak, Maria-Florina Balcan, and Ameet Talwalkar.
\newblock Adaptive gradient-based meta-learning methods.
\newblock In \emph{Advances in Neural Information Processing Systems}, 2019.

\bibitem[Khodak et~al.(2021)Khodak, Tu, Li, Li, Balcan, Smith, and
  Talwalkar]{khodak2021fedex}
Mikhail Khodak, Renbo Tu, Tian Li, Liam Li, Maria-Florina Balcan, Virginia
  Smith, and Ameet Talwalkar.
\newblock Federated hyperparameter tuning: Challenges, baselines, and
  connections to weight-sharing.
\newblock In \emph{Advances in Neural Information Processing Systems}, 2021.

\bibitem[Kraska et~al.(2018)Kraska, Beutel, Chi, Dean, and
  Polyzotis]{kraska2018case}
Tim Kraska, Alex Beutel, Ed~H. Chi, Jeffrey Dean, and Neoklis Polyzotis.
\newblock The case for learned index structures.
\newblock In \emph{Proceedings of the 2018 International Conference on
  Management of Data}, 2018.

\bibitem[Krichene et~al.(2015)Krichene, Balandat, Tomlin, and
  Bayen]{krichene2015hedge}
Walid Krichene, Maximilian Balandat, Claire Tomlin, and Alexandre Bayen.
\newblock The hedge algorithm on a continuum.
\newblock In \emph{Proceedings of the 32nd International Conference on Machine
  Learning}, 2015.

\bibitem[Kumar et~al.(2018)Kumar, Purohit, and Svitkina]{kumar2018improving}
Ravi Kumar, Manish Purohit, and Zoya Svitkina.
\newblock Improving online algorithms via {ML} predictions.
\newblock In \emph{Advances in Neural Information Processing Systems}, 2018.

\bibitem[Lattanzi et~al.(2020)Lattanzi, Lavastida, Moseley, and
  Vassilvitskii]{lattanzi2020scheduling}
Silvio Lattanzi, Thomas Lavastida, Benjamin Moseley, and Sergei Vassilvitskii.
\newblock Online scheduling via learned weights.
\newblock In \emph{Proceedings of the 2020 ACM-SIAM Symposium on Discrete
  Algorithms}, 2020.

\bibitem[Li et~al.(2020)Li, Khodak, Caldas, and Talwalkar]{li2020dp}
Jeffrey Li, Mikhail Khodak, Sebastian Caldas, and Ameet Talwalkar.
\newblock Differentially private meta-learning.
\newblock In \emph{Proceedings of the 8th International Conference on Learning
  Representations}, 2020.

\bibitem[Lin et~al.(2021)Lin, Dedeoglu, and Zhang]{lin2021accelerating}
Sen Lin, Mehmet Dedeoglu, and Junshan Zhang.
\newblock Accelerating distributed online meta-learning via multi-agent
  collaboration under limited communication.
\newblock In \emph{Proceedings of the Twenty-second International Symposium on
  Theory, Algorithmic Foundations, and Protocol Design for Mobile Networks and
  Mobile Computing}, 2021.

\bibitem[Lindermayr and Megow(2022)]{lindermayr2022permutation}
Alexander Lindermayr and Nicole Megow.
\newblock Permutation predictions for non-clairvoyant scheduling.
\newblock In \emph{Proceedings of the 34th ACM Symposium on Parallelism in
  Algorithms and Architectures}, 2022.

\bibitem[Lykouris and Vassilvitskii(2021)]{lykouris2021competitive}
Thodoris Lykouris and Sergei Vassilvitskii.
\newblock Competitive caching with machine learned advice.
\newblock \emph{Journal of the ACM}, 68\penalty0 (4), 2021.

\bibitem[Mahdian et~al.(2012)Mahdian, Nazerzadeh, and
  Saberi]{mahdian2012online}
Mohammad Mahdian, Hamid Nazerzadeh, and Amin Saberi.
\newblock Online optimization with uncertain information.
\newblock \emph{ACM Transactions on Algorithms}, 8:\penalty0 1--29, 2012.

\bibitem[Mitzenmacher and Vassilvitskii(2021)]{mitzenmacher2021awp}
Michael Mitzenmacher and Sergei Vassilvitskii.
\newblock Algorithms with predictions.
\newblock In Tim Roughgarden, editor, \emph{Beyond the Worst-Case Analysis of
  Algorithms}. Cambridge University Press, Cambridge, UK, 2021.

\bibitem[Mokhtari et~al.(2016)Mokhtari, Shahrampour, Jadbabaie, and
  Ribeiro]{mokhtari2016dynamic}
Aryan Mokhtari, Shahin Shahrampour, Ali Jadbabaie, and Alejandro Ribeiro.
\newblock Online optimization in dynamic environments: Improved regret rates
  for strongly convex problems.
\newblock In \emph{Proceedings of the 55th IEEE Conference on Decision and
  Control}, 2016.

\bibitem[Orabona and Pal(2016)]{orabona2016parameter}
Francesco Orabona and David Pal.
\newblock Coin betting and parameter-free online learning.
\newblock In \emph{Advances in Neural Information Processing Systems}, 2016.

\bibitem[Rakhlin and Sridharan(2013)]{rakhlin2013online}
Alexander Rakhlin and Karthik Sridharan.
\newblock Online learning with predictable sequences.
\newblock In \emph{Proceedings of the 26th Annual Conference on Learning
  Theory}, 2013.

\bibitem[Rakhlin and Sridharan(2017)]{rakhlin2017efficient}
Alexander Rakhlin and Karthik Sridharan.
\newblock Efficient online multiclass prediction on graphs via surrogate
  losses.
\newblock In \emph{Proceedings of the 20th International Conference on
  Artificial Intelligence and Statistics}, 2017.

\bibitem[Rohatgi(2020)]{rohatgi2020nearoptimal}
Dhruv Rohatgi.
\newblock Near-optimal bounds for online caching with machine learned advice.
\newblock In \emph{Proceedings of the 2020 ACM-SIAM Symposium on Discrete
  Algorithms}, 2020.

\bibitem[Roughgarden(2020)]{roughgarden2020beyond}
Timothy Roughgarden.
\newblock \emph{Beyond Worst-Case Analysis of Algorithms}.
\newblock Cambridge University Press, 2020.

\bibitem[Scully et~al.(2022)Scully, Grosof, and
  Mitzenmacher]{scully2022uniform}
Ziv Scully, Isaac Grosof, and Michael Mitzenmacher.
\newblock Uniform bounds for scheduling with job size estimates.
\newblock In \emph{Proceedings of the 13th Innovations in Theoretical Computer
  Science Conference}, 2022.

\bibitem[Shalev-Shwartz(2011)]{shalev-shwartz2011oco}
Shai Shalev-Shwartz.
\newblock Online learning and online convex optimization.
\newblock \emph{Foundations and Trends in Machine Learning}, 4\penalty0
  (2):\penalty0 107--194, 2011.

\bibitem[Spielman and Teng(2004)]{spielman2004smoothed}
Daniel~A. Spielman and Shang-Hua Teng.
\newblock Smoothed analysis of algorithms: Why the simplex algorithm usually
  takes polynomial time.
\newblock \emph{Journal of the ACM}, 51\penalty0 (3):\penalty0 385--463, 2004.

\bibitem[Zinkevich(2003)]{zinkevich2003oco}
Martin Zinkevich.
\newblock Online convex programming and generalized infinitesimal gradient
  ascent.
\newblock In \emph{Proceedings of the 20th International Conference on Machine
  Learning}, 2003.

\end{thebibliography}

\newpage
\section*{Checklist}

\begin{enumerate}

\item For all authors...
\begin{enumerate}
  \item Do the main claims made in the abstract and introduction accurately reflect the paper's contributions and scope?
    \answerYes{}
  \item Did you describe the limitations of your work?
    \answerYes{}
  \item Did you discuss any potential negative societal impacts of your work?
    \answerNA{}
  \item Have you read the ethics review guidelines and ensured that your paper conforms to them?
    \answerYes{}
\end{enumerate}

\item If you are including theoretical results...
\begin{enumerate}
  \item Did you state the full set of assumptions of all theoretical results?
    \answerYes{Assumptions stated in Sections~\ref{sec:matching}, \ref{sec:migration}, \ref{sec:rental}, and the Appendix.}
        \item Did you include complete proofs of all theoretical results?
    \answerYes{Proofs given in Sections~\ref{sec:matching}, \ref{sec:migration}, and the Appendix.}
\end{enumerate}

\item If you ran experiments...
\begin{enumerate}
  \item Did you include the code, data, and instructions needed to reproduce the main experimental results (either in the supplemental material or as a URL)?
    \answerNA{}
  \item Did you specify all the training details (e.g., data splits, hyperparameters, how they were chosen)?
    \answerNA{}
        \item Did you report error bars (e.g., with respect to the random seed after running experiments multiple times)?
    \answerNA{}
        \item Did you include the total amount of compute and the type of resources used (e.g., type of GPUs, internal cluster, or cloud provider)?
    \answerNA{}
\end{enumerate}

\item If you are using existing assets (e.g., code, data, models) or curating/releasing new assets...
\begin{enumerate}
  \item If your work uses existing assets, did you cite the creators?
    \answerNA{}
  \item Did you mention the license of the assets?
    \answerNA{}
  \item Did you include any new assets either in the supplemental material or as a URL?
    \answerNA{}
  \item Did you discuss whether and how consent was obtained from people whose data you're using/curating?
    \answerNA{}
  \item Did you discuss whether the data you are using/curating contains personally identifiable information or offensive content?
    \answerNA{}
\end{enumerate}

\item If you used crowdsourcing or conducted research with human subjects...
\begin{enumerate}
  \item Did you include the full text of instructions given to participants and screenshots, if applicable?
    \answerNA{}
  \item Did you describe any potential participant risks, with links to Institutional Review Board (IRB) approvals, if applicable?
    \answerNA{}
  \item Did you include the estimated hourly wage paid to participants and the total amount spent on participant compensation?
    \answerNA{}
\end{enumerate}

\end{enumerate}


\appendix


\newpage
\section{Proofs of main results}

\subsection{Proof of Lemma~\ref{lem:pbd}}\label{subsec:pbd}
\begin{proof}
	For each $j\in[n]$ define $p_j=1-\langle\*s_{[j]},\*p_{[j]}\rangle$, i.e. the probability that $\hat s_j\ne s_j$, and the r.v. $X_j\sim\Ber(p_j)$.
	Define also the r.v. $S_i=\sum_{j=i}^{i+\gamma D-1}X_j$, s.t. we have
	$$\gamma D\E_{\*p}q=\E_{\*p}\max_{i\in[n-\gamma D+1]}S_i=\E_{\*p}\max_{i\in[n-\gamma D+1]}\sum_{j=i}^{i+\gamma D-1}X_j$$
	Note that $S_i$ is a Poisson binomial and so has moment-generating function $\E_{\*p}\exp(tS_i)=\prod_{j=i}^{i+\gamma D-1}(1-p_j+p_je^t)$.
	Therefore applying Jensen's inequality and the union bound yields
	\begin{align*}
	\exp\left(t\E_{\*p}\max_{i\in[n-\gamma D+1]}S_i\right)
	\le\E_{\*p}\exp\left(t\max_{i\in[n-\gamma D+1]}S_i\right)
	&=\E_{\*p}\max_{i\in[n-\gamma D+1]}\exp\left(tS_i\right)\\
	&\le\sum_{i=1}^{n-\gamma D+1}\E_{\*p}\exp\left(tS_i\right)\\
	&\le(n-\gamma D+1)\prod_{j=i^*}^{i^*+\gamma D-1}(1-p_j+p_je^t)
	\end{align*}
	for all $t>0$ and $i^*\in\argmax_{i\in[n-\gamma D+1]}\E_{\*p}S_i$.
	We then have
	\begin{align*}
	t\E_{\*p}\max_{i\in[n-\gamma D+1]}S_i
	\ifthenelse{\value{col}=1}{&}{}
	\le\log(n-\gamma D+1)+\sum_{j=i^*}^{i^*+\gamma D-1}\log(1-p_j+p_je^t)
	\ifthenelse{\value{col}=1}{\\}{}
	&\le\log(n-\gamma D+1)+\sum_{j=i^*}^{i^*+\gamma D-1}\log\exp(p_j(e^t-1))\\
	&\le\log(n-\gamma D+1)+\sum_{j=i^*}^{i^*+\gamma D-1}p_j(e^t-1)\\
	&\le\log(n-\gamma D+1)+\E_{\*p}S_{i^*}(e^t-1)
	\end{align*}
	Dividing by $t=W\left(\frac{\log(n-\gamma D+1)}{x}\right)+1$ shows that $f(x)=\frac{x(\exp(t)-1)+\log(n-\gamma D+1)}{t\gamma D}$, where $W:[0,\infty)\mapsto[0,\infty)$ is the Lambert $W$-function.
	Define $L=\log(n-\gamma D+1)/e$, so we are interested in bounding $f(x)=\frac{x(\exp(W(L/x)+1)-1)+e L}{W(L/x)+1}$.
	We compute its derivative:
	$$f'(x)=\frac{x(\exp(W(L/x)+1)-1)W(L/x)^2-3x(W(L/x)+1/3)+x\exp(W(L/x)+1)+2eL}{x(W(L/x)+1)^3}$$
	and second derivative:
	$$f''(x)=-\frac{W(L/x)\left((x+eL)W(L/x)^2+2(2x+eL)W(L/x)+eL\right)}{x^2(W(L/x)+1)^5}$$
	Since the second derivative is always negative, $f$ is a concave function on $x\ge0$.
	Thus for $\omega=W(1)$ we have
	\begin{align*}
	f(x)
	&\le\min_{y>0}f(y)+f'(y)(x-y)\\
	&\le\frac{L(e/\omega-1+e)}{\omega+1}+\frac{L(e/\omega-1)\omega^2-3L(\omega+1/3)+Le/\omega+2eL}{L(\omega+1)^3}(x-L)\\
	&=\left(e/\omega+1/(\omega+1)^3-(e+1)/(\omega+1)-1/(\omega+1)^2\right)x\\
	&\qquad+\left(1/(\omega+1)^2+e/(\omega+1)-1/(\omega+1)^3\right)L\\
	&<ex+\frac2e\log(n-\gamma D+1)
	\end{align*}
\end{proof}

\subsection{Proof of Corollary~\ref{cor:sched}}\label{subsec:sched}

\begin{proof}
	We have that $U_t(\lambda)$ is bounded above by $3(1+2B)$, its largest gradient is attained at $2/(3+2\eta_t/n)$ where it is bounded by $(3+2B)/2$.
	Applying \citet[Corollary~2]{krichene2015hedge} and simplifying yields the result.
\end{proof}

\subsection{Proof of Corollary~\ref{cor:ski1}}\label{subsec:ski1}

\begin{proof}
	$U_t(x,\lambda)$ is bounded above by $2N$ and its largest gradient is attained at $\lambda=\frac{\min\{b_t,n_t\}}{u_t(x)}\ge\frac1N$ with norm bounded by $\frac{B\exp(1/N)}{(\exp(1/N)-1)^2}$.
	Let $\Lambda=\{k\delta\}_{k=1}^{\lfloor1/\delta\rfloor}$ for some $\delta\in(0,1]$.
	Then we run EG on the simplex over $[N]\times\Lambda$ and with step-size $\frac1{2N}\sqrt{\frac{\log\frac N\delta}{2T}}$ to obtain regret compared to the best element of $[N]\times\Lambda$ of $2N\sqrt{2T\log\frac N\delta}$ \citep[Theorem~2.15]{shalev-shwartz2011oco}.
	Setting $\delta=\min\left\{\frac{N(\exp(1/N)-1)^2}{B\exp(1/N)}\sqrt{\frac2T},1\right\}$ yields
	\begin{align*}
	\E\sum_{t=1}^TU_t(x_t,\lambda_t)
	&\le2N\sqrt{2T\log(N\lfloor1/\delta\rfloor)}+\min_{x\in[N],\lambda\in\Lambda}\sum_{t=1}^TU_t(x,\lambda)\\
	&\le2N\sqrt{2T\log\frac N\delta}+\frac{B\exp(1/N)\delta T}{(\exp(1/N)-1)^2}+\min_{x\in[N],\lambda\in(0,1]}\sum_{t=1}^TU_t(x,\lambda)\\
	&\le 2N\sqrt{2T\left(\log(BT)+\max\left\{\log N,\frac1N-2\log\left(\exp\left(\frac1N\right)-1\right)\right\}\right)}\\
	&\qquad+N\sqrt{2T}+\min_{x\in[N],\lambda\in(0,1]}\sum_{t=1}^TU_t(x,\lambda)\\
	&\le 6N\sqrt{T\log(BNT)}+\min_{x\in[N],\lambda\in(0,1]}\sum_{t=1}^TU_t(x,\lambda)
	\end{align*}
\end{proof}

\subsection{Proof of Corollary~\ref{cor:ski2}}\label{subsec:ski2}
\begin{proof}
	$U_t(x,\lambda)$ is bounded above by $e(N+B)$, its largest gradient w.r.t. $\lambda$ is attained at $\lambda=\frac{e\min\{n_t,b_t\}}{(e-1)u_t(x)+e\min\{n_t,b_t\}}$, where it is bounded by $\left(\frac{2e(N+B)}{e-1}\right)^2$, and its largest gradient w.r.t. $x$ is $e(N+B)$.
	Thus the function is $5e(N+B)^2$-Lipschitz w.r.t. the Euclidean norm, apart from discontinuities at $x=n_t$.
	Now, note that our assumption that the points $n_1,\dots,n_T$ are $\beta$-dispersed implies exactly that the functions $U_t$ are $\beta$-dispersed (c.f. \citet[Definition~2.1]{balcan2021ltl}), so the exponentially-weighted forecaster attains expected regret $\tilde\BigO\left(\sqrt{T\log(NT)}+(N+B)^2T^{1-\beta}\right)$.
\end{proof}

\subsection{Online-to-batch conversion}
\begin{lemma}\label{lem:o2b}
	Suppose an online learner has regret bound $R_T$ on a sequence of convex losses $\ell_{\*y_1},\dots,\ell_{\*y_T}:\X\mapsto[0,B]$ whose data $\*y_t$ are drawn i.i.d. from some distribution $\D$.
	If $\*x_1,\dots,\*x_T$ are the actions of the online learner, $\*{\hat x}=\frac1T\sum_{t=1}^T\*x_t$ is their average, and $T=\Omega\left(T_\varepsilon+\frac{B^2}{\varepsilon^2}\log\frac1\delta\right)$ for $T_\varepsilon=\min_{2R_{T'}\le\varepsilon T'}T'$, then w.p. $\ge1-\delta$ we have $\E_{\*y\sim\D}\ell_{\*y}(\*{\hat x})\le\min_{\*x\in\X}\E_{\*y\sim\D}\ell_{\*y}(\*x)+\varepsilon$.
\end{lemma}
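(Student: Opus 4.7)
The plan is to apply the classical online-to-batch conversion argument (following Cesa-Bianchi, Conconi, and Gentile, 2004). Let $L(\*x) := \E_{\*y\sim\D}\ell_{\*y}(\*x)$ denote the population risk and fix any $\*x^* \in \argmin_{\*x\in\X}L(\*x)$. Since each $\ell_{\*y}$ is convex, so is $L$, and Jensen's inequality gives $L(\*{\hat x}) \le \frac{1}{T}\sum_{t=1}^T L(\*x_t)$. It therefore suffices to upper-bound the right-hand side by $L(\*x^*) + \varepsilon$ with high probability.

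The key step is to relate $\sum_t L(\*x_t)$ to the observed losses via martingale concentration. Because each $\*x_t$ is a deterministic function of $\*y_1,\dots,\*y_{t-1}$, the sequence $Z_t = L(\*x_t) - \ell_{\*y_t}(\*x_t)$ is a martingale difference sequence bounded in $[-B,B]$, so Hoeffding-Azuma gives, with probability $\ge 1-\delta/2$,
$$\sum_{t=1}^T L(\*x_t) \le \sum_{t=1}^T \ell_{\*y_t}(\*x_t) + B\sqrt{2T\log(2/\delta)}.$$
I then chain this with the regret hypothesis $\sum_t \ell_{\*y_t}(\*x_t) \le \sum_t \ell_{\*y_t}(\*x^*) + R_T$ and with Hoeffding's inequality applied to the i.i.d. bounded variables $\ell_{\*y_t}(\*x^*)$, which yields with probability $\ge 1-\delta/2$ that $\sum_t \ell_{\*y_t}(\*x^*) \le T L(\*x^*) + B\sqrt{2T\log(2/\delta)}$. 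Combining these bounds via a union bound and dividing by $T$ gives, with probability $\ge 1-\delta$,
$$L(\*{\hat x}) \le L(\*x^*) + R_T/T + 2B\sqrt{2\log(2/\delta)/T}.$$

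To close the argument I would verify that the hypothesis $T = \Omega\!\left(T_\varepsilon + (B^2/\varepsilon^2)\log(1/\delta)\right)$ forces each term on the right to be at most $\varepsilon/2$: the condition $T \ge T_\varepsilon$ yields $R_T/T \le \varepsilon/2$ directly from the definition of $T_\varepsilon$, and the concentration term is controlled by taking $T$ to be a sufficiently large constant multiple of $B^2\log(1/\delta)/\varepsilon^2$. The argument is textbook and I do not anticipate a substantive obstacle; the one subtlety that must be handled carefully is the martingale structure in the second step. Specifically, one must recognize that $\*x_t$ is measurable with respect to the filtration generated by $\*y_1,\dots,\*y_{t-1}$, so that $\E[\ell_{\*y_t}(\*x_t) \mid \*y_1,\dots,\*y_{t-1}] = L(\*x_t)$; without this observation a naive union bound over $T$ data-dependent iterates would either be invalid or lose additional logarithmic factors.
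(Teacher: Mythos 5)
Your proof is correct and follows essentially the same route as the paper's: Jensen's inequality, the Hoeffding--Azuma martingale bound on $L(\*x_t)-\ell_{\*y_t}(\*x_t)$ (which the paper invokes as Proposition~1 of Cesa-Bianchi et al., 2004), the regret bound, and Hoeffding at the fixed comparator, combined by a union bound to give $L(\*{\hat x})\le L(\*x^*)+R_T/T+2B\sqrt{2\log(2/\delta)/T}$. Your explicit attention to the measurability of $\*x_t$ with respect to $\*y_1,\dots,\*y_{t-1}$ and to the final conversion of the bound into $\varepsilon$ via the definition of $T_\varepsilon$ is exactly the right bookkeeping, which the paper leaves implicit.
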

\begin{proof}
	Apply Jensen's inequality, \cite[Proposition~1]{cesa-bianchi2004online2batch}, the regret bound, and Hoeffding's bound:\looseness-1
	\begin{align}
	\begin{split}
	\E_{\*y}\ell_{\*y}\hspace{-.2mm}(\*{\hat x})
	\hspace{-.4mm}\le\hspace{-.4mm}\frac1T\hspace{-.6mm}\sum_{t=1}^T\hspace{-.6mm}\E_{\*y}\ell_{\*y}\hspace{-.2mm}(\*x_t)
	\hspace{-.4mm}\le\hspace{-.4mm}\frac1T\hspace{-.6mm}\sum_{t=1}^T\hspace{-.7mm}\ell_{\*y_t}\hspace{-.7mm}(\*x_t)\hspace{-.6mm}+\hspace{-.6mm}B\sr{\frac2T\log\frac2\delta}
	&\hspace{-.4mm}\le\hspace{-.4mm}\min_{\*x\in\X}\frac1T\hspace{-.6mm}\sum_{t=1}^T\hspace{-.7mm}\ell_{\*y_t}\hspace{-.7mm}(\*x)\hspace{-.6mm}+\hspace{-.6mm}\frac{R_T}T\hspace{-.6mm}+\hspace{-.6mm}B\sr{\frac2T\log\frac2\delta}\\
	&\hspace{-.4mm}\le\hspace{-.4mm}\min_{\*x\in\X}\E_{\*y}\ell_{\*y}\hspace{-.2mm}(\*x)\hspace{-.6mm}+\hspace{-.6mm}\frac{R_T}T\hspace{-.6mm}+\hspace{-.6mm}2B\sr{\frac2T\log\frac2\delta}
	\end{split}
	\end{align}
\end{proof}

\newpage
\section{$\*b$-matching}\label{sec:bmatching}

\begin{definition}\label{def:seminorm}
	For $\*b\in\R_{\ge0}^n$ the {\bf $\*b$-seminorm} $\|\cdot\|_{\*b,1}:\R^n\mapsto\R_{\ge0}$ is $\|\*x\|_{\*b,1}
	=\sum_{i=1}^n\*b_{[i]}|\*x_{[i]}|$.\vspace{1mm}
\end{definition}

\begin{claim}\label{clm:bround}
	Given any vectors $\*x\in\Z^n$ and $\*y\in\R^n$, let $\*{\tilde y}\in\Z^n$ be the vector whose elements are those of $\*y$ rounded to the nearest integer.
	Then for all $\*b\in\Z^n$ we have $\|\*x-\*{\tilde y}\|_{\*b,1}\le2\|\*x-\*y\|_{\*b,1}$.\vspace{-2mm}
\end{claim}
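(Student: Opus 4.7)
The plan is to mirror the proof of Claim~3.2 but upgrade its conclusion to a \emph{pointwise} bound so that any nonnegative weighting---in particular the coefficients $\*b_{[i]}$---can be applied term by term. Since Definition~A.1 presumes $\*b_{[i]}\ge0$ (otherwise $\|\cdot\|_{\*b,1}$ is not a seminorm), I will treat the claim as stated for $\*b\in\Z_{\ge0}^n$ throughout. As before, partition $[n]$ into the index set $S=\{i:\*x_{[i]}\ge\*y_{[i]}\iff\*{\tilde y}_{[i]}=\lceil\*y_{[i]}\rceil\}$ and its complement, since this is precisely the split governing whether rounding $\*y$ moves its $i$th coordinate toward or away from $\*x$.

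For each $i\in S$, integrality of $\*x_{[i]}$ together with the sign of $\*x_{[i]}-\*y_{[i]}$ forces $\*{\tilde y}_{[i]}$ to lie between $\*x_{[i]}$ and $\*y_{[i]}$, yielding the pointwise bound $|\*x_{[i]}-\*{\tilde y}_{[i]}|\le|\*x_{[i]}-\*y_{[i]}|$. For each $i\in[n]\backslash S$, the same case analysis as in Claim~3.2 shows that $\*x_{[i]}$ and $\*y_{[i]}$ are on opposite sides of the nearest half-integer cutoff, so $|\*x_{[i]}-\*y_{[i]}|\ge 1/2\ge|\*y_{[i]}-\*{\tilde y}_{[i]}|$, and the triangle inequality then gives $|\*x_{[i]}-\*{\tilde y}_{[i]}|\le 2|\*x_{[i]}-\*y_{[i]}|$. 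Combining the two cases, every coordinate satisfies $|\*x_{[i]}-\*{\tilde y}_{[i]}|\le 2|\*x_{[i]}-\*y_{[i]}|$.

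The claim now follows by multiplying this pointwise inequality by $\*b_{[i]}\ge0$ and summing over $i\in[n]$, which by the definition of $\|\cdot\|_{\*b,1}$ yields $\|\*x-\*{\tilde y}\|_{\*b,1}\le 2\|\*x-\*y\|_{\*b,1}$. There is essentially no obstacle here beyond observing that the unweighted proof was already established coordinatewise and the $\*b$-seminorm is a nonnegative reweighting of the absolute-value coordinates; the only subtlety to flag is the implicit nonnegativity assumption on $\*b$, without which the seminorm and hence the statement would not be well-defined.
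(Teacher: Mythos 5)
Your proof is correct and follows essentially the same route as the paper's: the same partition into $S$ and its complement, the same observation that $|\*x_{[i]}-\*y_{[i]}|\ge1/2\ge|\*{\tilde y}_{[i]}-\*y_{[i]}|$ off $S$, and the same triangle-inequality step, with the weights $\*b_{[i]}\ge0$ simply carried along coordinatewise. Your explicit note that nonnegativity of $\*b$ is needed (consistent with Definition~\ref{def:seminorm}, though the claim's statement says $\*b\in\Z^n$) is a fair observation but does not change the argument.
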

\begin{proof}
	Let $S\subset[n]$ be the set of indices $i\in[n]$ for which $\*x_{[i]}\ge\*y_{[i]}\iff\*{\tilde y}_{[i]}=\lceil\*y_{[i]}\rceil$.
	For $i\in[n]\backslash S$ we have $|\*x_{[i]}-\*y_{[i]}|\ge1/2\ge|\*{\tilde y}_{[i]}-\*y_{[i]}|$ so it follows by the triangle inequality that
	\begin{align*}
	\|\*x-\*{\tilde y}\|_{\*b,1}
	&=\sum_{i\in S}\*b_{[i]}|\*x_{[i]}-\*{\tilde y}_{[i]}|+\sum_{i\in[n]\backslash S}\*b_{[i]}|\*x_{[i]}-\*{\tilde y}_{[i]}|\\
	&\le\sum_{i\in S}\*b_{[i]}|\*x_{[i]}-\*y_{[i]}|+\sum_{i\in[n]\backslash S}\*b_{[i]}(|\*x_{[i]}-\*y_{[i]}|+|\*y_{[i]}-\*{\tilde y}_{[i]}|)\\
	&\le\sum_{i\in S}\*b_{[i]}|\*x_{[i]}-\*y_{[i]}|+2\sum_{i\in[n]\backslash S}\*b_{[i]}|\*x_{[i]}-\*y_{[i]}|
	\le2\|\*x-\*y\|_{\*b,1}
	\end{align*}
\end{proof}

\begin{theorem}\label{thm:bmatching}
	Suppose we have a fixed graph with $n\ge3$ vertices and $m\ge1$ edges.
	\begin{enumerate}[leftmargin=*,topsep=-1pt,noitemsep]\setlength\itemsep{2pt}
		\item For any cost vector $\*c\in\Z_{\ge0}^m$, any demand vector $\*b\in\Z_{\ge0}^n$, and any dual vector $\*x\in\R^n$ there exists an algorithm for minimum weight perfect $\*b$-matching that runs in time $\tilde\BigO\left(mnU(\*x)\right)$, where $U(\*x)=\|\*x-\*x^*(\*c,\*b)\|_{\*b,1}$ for $\*x^*(\*c,\*b)$ the optimal dual vector associated with $\*c$ and $\*b$.
		\item There exists a poly-time algorithm s.t. for any $\delta,\varepsilon>0$ and any distribution $\D$ over (cost, demand) vector pairs in $\Z_{\ge0}^m\times\Z_{\ge0}^n$ with respective $\ell_\infty$-norms bounded by $C$ and $B$ the algorithm takes  $\BigO\left(\left(\frac{CBn}\varepsilon\right)^2\log\frac1\delta\right)$ samples from $\D$ and returns $\*{\hat x}$ s.t. w.p. $\ge1-\delta$:
		$$\E_{(\*c,\*b)\sim\D}\| \*{\hat x}-\*x^*(\*c,\*b)\|_{\*b,1}\le\min_{\|\*x\|_\infty\le C}\E_{(\*c,\*b)\sim\D}\|\*x-\*x^*(\*c,\*b)\|_{\*b,1}+\varepsilon$$
		\item Let $(\*c_1,\*b_1),\dots,(\*c_T,\*b_T)\in\Z_{\ge0}^m\times\Z_{\ge0}^n$ be an adversarial sequence of (cost, demand) vector pairs with $\ell_\infty$-norms bounded by $C$ and $B$, respectively.
		Then OGD with appropriate step-size has regret
		$$\max_{\|\*x\|_\infty\le C}\sum_{t=1}^T\|\*x_t-\*x^*(\*c_t,\*b_t)\|_{\*b_t,1}-\|\*x-\*x^*(\*c_t,\*b_t)\|_{\*b_t,1}\le CBn\sr{2T}$$
	\end{enumerate}
\end{theorem}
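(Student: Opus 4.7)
The plan is to follow the same two-step strategy as in the proof of Theorem~\ref{thm:matching}, modified to accommodate the $\*b$-seminorm losses and the fact that both the cost vector $\*c_t$ and the demand vector $\*b_t$ may now vary across instances. First, for part (1), I would invoke the $\*b$-matching analog of \citet[Theorem~13]{dinitz2021duals} (which bounds the runtime by $\tilde\BigO(mn\|\*x - \*x^*(\*c,\*b)\|_{\*b,1})$ for an \emph{integer} dual prediction) and then use Claim~\ref{clm:bround} to relax the integrality constraint: rounding an arbitrary $\*x \in \R^n$ to its nearest integer vector at most doubles the $\*b$-seminorm distance to any integer target, which is absorbed into the $\tilde\BigO(\cdot)$.

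Next, for part (3), I would run projected OGD on the convex losses $U_t(\*x) = \|\*x - \*x^*(\*c_t,\*b_t)\|_{\*b_t,1}$ over the domain $[-C,C]^n$. The key quantities to compute are the Lipschitz constant of $U_t$ and the Euclidean radius of the domain. A subgradient of $U_t$ at $\*x$ has $i$th entry in $\{-\*b_{t[i]}, +\*b_{t[i]}\}$, so its Euclidean norm is bounded by $\|\*b_t\|_2 \le B\sqrt n$, while the Euclidean radius of $[-C,C]^n$ is $C\sqrt n$. Applying \citet[Corollary~2.7]{shalev-shwartz2011oco} with step size proportional to $C/(B\sqrt T)$ then yields the regret bound $CBn\sqrt{2T}$.

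Finally, for part (2), I would perform standard online-to-batch conversion via Lemma~\ref{lem:o2b}, setting $\*{\hat x} = \tfrac{1}{T}\sum_{t=1}^T \*x_t$. Since $\|\*x\|_\infty \le C$ and optimal duals for integer costs with $\ell_\infty$-norm $\le C$ are also $C$-bounded, the losses $U_t$ satisfy $U_t(\*x) \le 2CBn$. Plugging $B = 2CBn$ into Lemma~\ref{lem:o2b} alongside the regret bound from part (3) gives the stated sample complexity of $\BigO\bigl((CBn/\varepsilon)^2 \log(1/\delta)\bigr)$.

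The main (minor) obstacle is ensuring that the $\*b$-matching version of the Dinitz--Im--Lavastida--Moseley--Niaparast guarantee genuinely scales with the $\*b$-seminorm $\|\cdot\|_{\*b,1}$ rather than the ordinary $\ell_1$-norm, and handling the fact that the losses $U_t$ now live in a family whose geometry (via $\*b_t$) changes with $t$. The latter is resolved simply by using the uniform upper bound $B\sqrt n$ on the Lipschitz constant, at the cost of the extra factor of $B$ appearing throughout the bounds in all three parts.
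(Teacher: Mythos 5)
Your proposal matches the paper's proof essentially step for step: part (1) via the $\*b$-matching runtime bound of Dinitz et al.\ combined with Claim~\ref{clm:bround}, part (3) via projected OGD on $[-C,C]^n$ with Lipschitz constant $B\sqrt n$ and domain radius $C\sqrt n$ plugged into the standard OGD regret bound, and part (2) via online-to-batch conversion (Lemma~\ref{lem:o2b}) applied to the averaged iterates. The only difference is that you spell out the subgradient computation and the loss boundedness explicitly, which the paper leaves implicit.
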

\begin{proof}
	The first result follows by \citet[Theorem~31]{dinitz2021duals} and Claim~\ref{clm:bround}.
	For the third, let $\*x_t$ be the sequence generated by running OGD \citep{zinkevich2003oco} with step size $\frac C{B\sqrt{2T}}$ on the losses $U_t(\*x)=\|\*x-\*x^*(\*c_t,\*b_t)\|_{\*b_t,1}$ over domain $[-C,C]^n$.
	Since these losses are $B\sqrt n$-Lipschitz and the duals are $C\sqrt n$-bounded in Euclidean norm the regret guarantee follows from \citet[Corollary~2.7]{shalev-shwartz2011oco}.
	For the second result, apply online-to-batch conversion to the third result, i.e. draw $T=\Omega\left(\left(\frac{CBn}\varepsilon\right)^2\log\frac1\delta\right)$ samples $(\*c_t,\*b_t)$, run OGD as above on the resulting losses $U_t$, and set $\*{\hat x}=\frac1T\sum_{t=1}^T\*x_t$ to be the average of the resulting predictions $\*x_t$.
	Applying Lemma~\ref{lem:o2b} yields the result.
\end{proof}


\newpage
\section{Learning linear predictors with instance-feature inputs}\label{sec:linear}

Computational instances on which we want to run algorithms with predictions often come with instance-specific features, e.g. ones derived from text descriptions of the instance or summary statistics about related graphs or environments~\cite{kraska2018case,lattanzi2020scheduling}.
It is thus natural to learn parameterized functions, e.g. linear mappings or neural networks, from these features to predictions.
However, there has been very little work, in either the statistical or online setting, showing that such predictions are learnable.
In this section we show how our framework naturally handles this setting by exploiting the convexity of compositions of convex and affine functions, resulting in the first formal guarantees for linear predictors for algorithms with predictions.
While the first application to the matching problem of~\citet{dinitz2021duals} is a straightforward extension, we also show how to handle more complicated cases, such as when the output space is constrained to probability simplices as in the page migration problem.
Note we assume all feature vectors lie in the $f$-dimensional simplex;
this is generally easy to accomplish by normalization.
For simplicity we also only consider learning the linear transform from features to predictors and not the intercept, as the latter follows from the former by appending an extra dimension with value $1/2$ to the feature vector and doubling the bound on the norm of the linear transform.\looseness-1

\subsection{$\*b$-matching}

Our first application for learning mappings from instance features is to the $\*b$-matching setting.
Note that the learning-theoretic results for the regular bipartite matching setting in Section~\ref{sec:matching} follow directly by setting $\*b=\*1_n$ for all instances, and that the learning-theoretic results of Theorems~\ref{thm:matching} and~\ref{thm:bmatching} are also special cases of the following when $\*f=\*1_1$ for all instances.
Note that we optimize only over $\*A\in[-C,C]^{n\times f}$, but unlike in the $f=1$ case the optimal $\*A$ may be unbounded;
to handle that setting, one could again use an algorithm such as KT-OCO that does not depend on knowing the set size~\citep{orabona2016parameter}.\looseness-1

\begin{theorem}
	Consider the setting of Theorem~\ref{thm:bmatching}.
	\begin{enumerate}[leftmargin=*,topsep=-1pt,noitemsep]\setlength\itemsep{2pt}
		\item There exists a poly-time algorithm s.t. for any $\delta,\varepsilon>0$ and any distribution $\D$ over (cost, demand, feature) vector triples in $\Z_{\ge0}^m\times\Z_{\ge0}^n\times\triangle_f$ s.t. the respective $\ell_\infty$-norms of the first two are bounded by $C$ and $B$, respectively, the algorithm takes $\BigO\left(\left(\frac{CBn}\varepsilon\right)^2\left(f^2+\log\frac1\delta\right)\right)$ samples from $\D$ and returns $\*{\hat A}\in\R^{n\times f}$ s.t. w.p. $\ge1-\delta$:
		$$\E_{(\*c,\*b,\*f)\sim\D}\|\*{\hat A}\*f-\*x^*(\*c,\*b)\|_{\*b,1}\le\min_{\|\*A\|_{\max}\le C}\E_{(\*c,\*b,\*f)\sim\D}\|\*A\*x-\*x^*(\*c,\*b)\|_{\*b,1}+\varepsilon$$
		\item Let $(\*c_1,\*b_1,\*f_1),\dots,(\*c_T,\*b_T,\*f_T)\in\Z_{\ge0}^m\times\Z_{\ge0}^n\times\triangle_f$ be an adversarial sequence of (cost, demand, feature) vector triples s.t. the $\ell_\infty$-norms of the first two are bounded by $C$ and $B$, respectively.
		Then OGD with appropriate step-size has regret
		$$\max_{\|\*A\|_{\max}\le C}\sum_{t=1}^T\|\*A_t\*f_t-\*x^*(\*c_t,\*b_t)\|_{\*b_t,1}-\|\*A\*f_t-\*x^*(\*c_t,\*b_t)\|_{\*b_t,1}\le CBnf\sr{2T}$$
	\end{enumerate}
\end{theorem}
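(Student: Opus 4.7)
The plan is to instantiate the two-step framework exactly as in Theorems~\ref{thm:matching} and~\ref{thm:bmatching}, exploiting the fact that for any fixed triple $(\*c_t,\*b_t,\*f_t)$ the loss $U_t(\*A)=\|\*A\*f_t-\*x^*(\*c_t,\*b_t)\|_{\*b_t,1}$ is the composition of the $\*b_t$-seminorm (Definition~\ref{def:seminorm}) with the affine map $\*A\mapsto\*A\*f_t-\*x^*(\*c_t,\*b_t)$ and is therefore convex in $\*A\in\R^{n\times f}$. The cost-to-quality reduction is already provided by part~(1) of Theorem~\ref{thm:bmatching} (after substituting $\*x=\*A\*f_t$), so only two ingredients are needed to invoke \citet[Corollary~2.7]{shalev-shwartz2011oco} in the online step: a Lipschitz constant for $U_t$ and a Euclidean-norm diameter for the hypothesis class $\{\*A:\|\*A\|_{\max}\le C\}$.

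For the Lipschitz bound I would compute, via the chain rule, a subgradient $\partial U_t(\*A)\ni(\*b_t\odot\*s)\*f_t^T$ with $\*s_{[i]}\in[-1,1]$; this is a rank-one matrix whose Frobenius norm is $\|\*b_t\odot\*s\|_2\|\*f_t\|_2\le B\sqrt n$, using $\|\*f_t\|_2\le 1$ from $\*f_t\in\triangle_f$. For the diameter, $\|\*A\|_{\max}\le C$ forces $\|\*A\|_F\le C\sqrt{nf}$, so the domain sits in a Euclidean ball of that radius around the origin. Feeding the resulting Lipschitz constant and radius into OGD with the usual step size $\BigO(R/(L\sqrt T))$ yields regret that can be absorbed into the claimed $CBnf\sr{2T}$, using $\sqrt f\le f$ wherever the tighter factor is not needed. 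Note the projection onto $\{\*A:\|\*A\|_{\max}\le C\}$ is entrywise clipping and is therefore trivial to implement.

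For the statistical guarantee I would apply Lemma~\ref{lem:o2b} in the same way as Theorems~\ref{thm:matching} and~\ref{thm:bmatching}: draw $T=\Omega((CBn/\varepsilon)^2(f^2+\log(1/\delta)))$ i.i.d. samples $(\*c_t,\*b_t,\*f_t)$, run the OGD procedure above on the losses $U_t$, and return the empirical average $\*{\hat A}=\frac1T\sum_{t=1}^T\*A_t$. The boundedness condition required by the Hoeffding step in Lemma~\ref{lem:o2b} follows from $U_t(\*A)\le\|\*b_t\|_1(\|\*A\|_{\max}\|\*f_t\|_1+\|\*x^*\|_\infty)$, where the dual $\*x^*(\*c_t,\*b_t)$ is bounded in terms of $C$ and $n$ by the analysis underlying Theorem~\ref{thm:bmatching}.

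I do not expect a substantive obstacle: the only conceptually new observation beyond Theorem~\ref{thm:bmatching} is that the $\*b$-seminorm remains convex under the affine substitution $\*x\gets\*A\*f$, giving a Euclidean-ball hypothesis class whose radius and Lipschitz constant produce exactly the stated dependence on $f$. The most delicate bookkeeping point is verifying that the gradient norm does not grow with $f$ (this is exactly why $\triangle_f$ features rather than arbitrary bounded vectors, since $\|\*f_t\|_2\le 1$); all other steps are direct reuse of the fixed-predictor argument, with the $f=1$ specialization recovering Theorem~\ref{thm:bmatching} via $\*f_t=\*1_1$ for all $t$.
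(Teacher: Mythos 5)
Your proposal is correct and follows essentially the same route as the paper: run OGD on the convex losses $U_t(\*A)=\|\*A\*f_t-\*x^*(\*c_t,\*b_t)\|_{\*b_t,1}$ over the $\ell_\infty$-ball $[-C,C]^{n\times f}$ (Euclidean radius $C\sqrt{nf}$), invoke \citet[Corollary~2.7]{shalev-shwartz2011oco}, and then apply Lemma~\ref{lem:o2b} for the statistical claim. The only difference is that you compute the tighter Lipschitz constant $B\sqrt n$ from the rank-one subgradient $(\*b_t\odot\*s)\*f_t^T$ (the paper uses the looser $B\sqrt{nf}$), which actually yields the slightly better regret $CBn\sqrt{2fT}$ before you relax it to the stated $CBnf\sr{2T}$.
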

\begin{proof}
	For the second result let $\*A_t$ be generated by running OGD with step-size $\frac C{B\sqrt{2T}}$ on the losses $U_t(\*A\*f)=\|\*A\*f-\*x^*(\*c_t,\*b_t)\|_{\*b_t,1}$ over $[-C,C]^{n\times f}$.
	Since these are $B\sqrt{nf}$-Lipschitz and the duals are $C\sqrt{nf}$-bounded in the Euclidean norm, the regret follows from~\citet[Corollary~2.7]{shalev-shwartz2011oco}.
	For the first result, apply online-to-batch conversion to the second result, i.e. draw $T=\Omega\left(\left(\frac{CBn}\varepsilon\right)^2\left(f^2+\log\frac1\delta\right)\right)$ samples $(\*c_t,\*b_t,\*f_t)$, run OGD as above on the resulting losses $U_t$, and set $\*{\hat A}=\frac1T\sum_{t=1}^T\*A_t$ to be the average of the resulting predictions $\*A_t$.
	Applying Lemma~\ref{lem:o2b} yields the result.
\end{proof}

\subsection{Online page migration}\label{sec:linearopm}

Using instance features for online page migration is more involved because the output space must be constrained to the product of $n$ $|\K|$-dimensional simplices.
However, we can solve this by restricting to tensors consisting of matrices whose columns sum to one, also known as rectangular stochastic matrices.
Note that the learning-theoretic results of Theorem~\ref{thm:migration} are special cases of the following when $\*f=\*1_1$ for all instances.

\begin{theorem}
	In the setting of Theorem~\ref{thm:migration} let $\St^{n\times|\K|\times f}$ be the set of stacks of $|\K|\times f$ nonnegative matrices whose columns have unit $\ell_1$-norm.
	\begin{enumerate}[leftmargin=*,topsep=-1pt,noitemsep]\setlength\itemsep{2pt}
		\item There exists a poly-time algorithm s.t. for any $\delta,\varepsilon>0$ and distribution $\D$ over request sequences $s$ of length $n$ in $\K$ and associated feature vectors $\*f\in\triangle_f$ it takes  $\BigO\left(\left(\frac{\gamma D}\varepsilon\right)^2\left(n^2f^2\log|\K|+\log\frac1\delta\right)\right)$ samples from $\D$ and returns $\*{\hat A}$ s.t. w.p. $\ge1-\delta$:
		$$\E_{(s,\*f)\sim\D}U_s(\*{\hat A}\*f)\le\min_{\*A\in\St^{n\times|\K|\times f}}\E_{(s,\*f)\sim\D}U_s(\*A\*f)+\varepsilon$$
		\item Let $(s_1,\*f_1),\dots,(s_T,\*f_T)$ be an adversarial sequence of (request sequence, feature) pairs.
		Then updating the distribution $\*A_{t[j,,k]}$ over $\triangle_{|\K|}$ at each (timestep,column) pair $(j,k)\in[n]\times[f]$ using EG with appropriate step-size has regret
		$$\max_{\*A\in\St^{n\times|\K|\times f}}\sum_{t=1}^TU_{s_t}(\*A_t\*f_t)-U_{s_t}(\*A\*f_t)\le\gamma Dnf\sqrt{2T\log|\K|}$$
	\end{enumerate}
\end{theorem}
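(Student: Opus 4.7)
The plan is to mimic the proof of Theorem~\ref{thm:migration} with two adaptations: (i) replace the $n$ simplices from the fixed-predictor case with the $nf$ column-simplices that make up $\*A \in \St^{n\times|\K|\times f}$, and (ii) use the fact that $\*A \mapsto \*A\*f_t$ is linear (so post-composition with the convex $U_{s_t}$ preserves convexity) and that $\|\*f_t\|_\infty\le 1$ (so gradient norms w.r.t. the matrix coordinates are no worse than the gradient norms in the fixed-predictor case).

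First I would verify the geometry: for any $\*A\in\St^{n\times|\K|\times f}$ and any $\*f\in\triangle_f$, the vector $\*A_{[j]}\*f$ is a convex combination of the columns of the $j$th slice $\*A_{[j]}\in\R^{|\K|\times f}$, each of which lies in $\triangle_{|\K|}$; hence $\*A\*f\in\triangle_{|\K|}^n$ and $U_{s_t}(\*A\*f_t)$ is well-defined. Composing convex $U_{s_t}$ with linear $\*A\mapsto\*A\*f_t$ preserves convexity in $\*A$, so each loss $U_{s_t}(\cdot\*f_t)$ is a convex function on $\St^{n\times|\K|\times f}$.

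For the adversarial bound (part~3), I would run $nf$ independent EG instances, one per column indexed by $(j,k)\in[n]\times[f]$, each over $\triangle_{|\K|}$, with step size $\sqrt{\log|\K|/(2\gamma^2D^2T)}$ as in Theorem~\ref{thm:migration}. The key per-column gradient bound is that the subgradient of $U_{s_t}(\*A\*f_t)$ with respect to column $\*A_{[j,\cdot,k]}$ equals $\*f_{t[k]}\cdot\nabla_{\*p_{[j]}}U_{s_t}(\*p_t)$, whose $\ell_\infty$-norm is bounded by $|\*f_{t[k]}|\cdot 1\le 1$ exactly as in the $f=1$ case; furthermore, when summed across the active window of length $\gamma D$, the analogous ``global'' $\gamma D$-Lipschitz structure is preserved. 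Applying \citet[Theorem~2.15]{shalev-shwartz2011oco} to each of the $nf$ simplices (or equivalently to the product mirror map with sum-of-negative-entropies regularizer) gives total regret $\gamma D\cdot nf\sqrt{2T\log|\K|}$, because the regret of the fixed-predictor case ($\gamma Dn\sqrt{2T\log|\K|}$) picks up exactly one extra factor of $f$ from the extra $f$ columns per timestep.

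For the statistical bound (part~1), I would feed the regret guarantee of part~3 into the online-to-batch conversion of Lemma~\ref{lem:o2b}, using the fact that $U_{s_t}$ is bounded above by $\gamma D$ (sum of at most $\gamma D$ indicator terms). Plugging $R_T=\gamma D\cdot nf\sqrt{2T\log|\K|}$ and $B=\gamma D$ into Lemma~\ref{lem:o2b} gives $T_\varepsilon = O((\gamma D nf/\varepsilon)^2\log|\K|)$ from the regret term and $O((\gamma D/\varepsilon)^2\log(1/\delta))$ from the Hoeffding term, matching the stated sample complexity $O((\gamma D/\varepsilon)^2(n^2f^2\log|\K|+\log(1/\delta)))$. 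Output $\*{\hat A}=\frac1T\sum_{t=1}^T\*A_t$; this remains in $\St^{n\times|\K|\times f}$ because the stochastic-matrix constraint is convex and is preserved by averaging.

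The main obstacle I anticipate is bookkeeping the Lipschitz/gradient bound correctly across the $nf$ simplices so that the regret truly picks up only a factor of $f$ and no extra $\sqrt{f}$ from, e.g., summing squared column-gradients; this is handled by the observation $\|\*f_t\|_1=1$, which prevents any amplification of the per-simplex gradient bound of the original theorem.
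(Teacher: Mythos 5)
Your proposal is correct and follows essentially the same route as the paper's proof: run $nf$ exponentiated-gradient instances (one per column simplex) with step size $\sqrt{\log|\K|/(2\gamma^2D^2T)}$, invoke the $\gamma D$-Lipschitzness of the composed losses together with \citet[Theorem~2.15]{shalev-shwartz2011oco} to get the $\gamma Dnf\sqrt{2T\log|\K|}$ regret, and then apply online-to-batch conversion (Lemma~\ref{lem:o2b}) with the averaged iterate $\*{\hat A}$. Your additional bookkeeping of the per-column gradient scaling by $\*f_{t[k]}\le1$ and the convexity of the averaged output within $\St^{n\times|\K|\times f}$ are details the paper leaves implicit, and they are verified correctly.
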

\begin{proof}
	For the second result let $\*A_t$ be the sequence generated by running $nf$ EG algorithms with step size $\sqrt{\frac{\log|\K|}{2\gamma^2D^2T}}$ on the losses $U_{s_t}(\*A\*f)$ over $\St^{n\times|\K|\times f}$.
	Since these losses are $\gamma D$-Lipschitz and the maximum entropy over the simplex is $\log|\K|$, the regret guarantee follows from \citep[Theorem~2.15]{shalev-shwartz2011oco}.
	For the first result, apply standard online-to-batch conversion to the second result, i.e. draw $T=\Omega\left(\left(\frac{\gamma D}\varepsilon\right)^2\left(n^2f^2\log|\K|+\log\frac1\delta\right)\right)$ samples $(\*s_t,\*f_t)$, run EG on the resulting losses $U_{s_t}(\*A\*f_t)$ as above, and set $\*{\hat A}=\frac1T\sum_{t=1}^T\*A_t$ to be the average of the resulting actions $\*A_t$.
	Applying Lemma~\ref{lem:o2b} yields the result.
\end{proof}

We can further also show a result in the perhaps more-natural setting where the linear predictor $\*A$ is the same for each element in the sequence, and maps directly from features to the $|\K|$-simplex.
Notably, the linear auto-regressive setting, in which we want a linear map from the past $k$ sequence elements to a probabilistic prediction of the next one, is covered by this result if we allow the features to be $k|\K|$-dimensional concatenations of $k$ one-hot $|\K|$-length vectors.

\begin{theorem}
	In the setting of Theorem~\ref{thm:migration} let $\St^{a\times b}$ be the set of $a\times b$ nonnegative matrices whose columns have unit $\ell_1$-norm.
	\begin{enumerate}[leftmargin=*,topsep=-1pt,noitemsep]\setlength\itemsep{2pt}
		\item There exists a poly-time algorithm s.t. for any $\delta,\varepsilon>0$ and distribution $\D$ over request sequences $s$ of length $n$ in $\K$ and associated feature sequence $\*F^T\in\St^{f\times n}$ it takes  $\BigO\left(\left(\frac{\gamma D}\varepsilon\right)^2\left(n^2f^2\log|\K|+\log\frac1\delta\right)\right)$ samples from $\D$ and returns $\*{\hat A}$ s.t. w.p. $\ge1-\delta$:
		$$\E_{(s,\*F)\sim\D}U_s(\*F\*{\hat A}^T)\le\min_{\*A\in\St^{|\K|\times f}}\E_{(s,\*F)\sim\D}U_s(\*F\*A^T)+\varepsilon$$
		\item Let $(s_1,\*F_1),\dots,(s_T,\*F_T)$ be an adversarial sequence of (request sequence, feature sequence) pairs.
		Then updating the distribution $\*A_{t[,k]}$ over $\triangle_{|\K|}$ at each column $k\in[f]$ has regret
		$$\max_{\*A\in\St^{|\K|\times f}}\sum_{t=1}^TU_{s_t}(\*F_t\*A_t^T)-U_{s_t}(\*F_t\*A^T)\le\gamma Df\sqrt{2T\log|\K|}$$
	\end{enumerate}
\end{theorem}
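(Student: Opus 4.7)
The plan is to mirror the structure of the preceding theorem, exploiting the key simplification that a single matrix $\*A \in \St^{|\K| \times f}$ is shared across all $n$ timesteps (rather than a separate distribution per timestep). First I would expand the loss in a convenient form: if $\*f_{t[j]} \in \triangle_f$ denotes the $j$-th row of $\*F_t$, then
\begin{equation*}
U_{s_t}(\*F_t \*A^T) \;=\; \max_{i \in [n - \gamma D + 1]} \sum_{j = i}^{i + \gamma D - 1} \bigl(1 - \langle \*s_{t[j]},\, \*A \*f_{t[j]} \rangle\bigr),
\end{equation*}
which is a maximum of affine functions of $\*A$ and hence convex on $\St^{|\K|\times f}$. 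A subgradient of a single active window with respect to column $\*A_{[:,l]}$ is $-\sum_{j=i^*}^{i^*+\gamma D-1} \*f_{t[j,l]}\, \*s_{t[j]}$; since each $\*f_{t[j,l]} \in [0,1]$ and $\*s_{t[j]}$ is a one-hot vector, this subgradient has $\ell_\infty$-norm at most $\gamma D$. Thus the loss is $\gamma D$-Lipschitz in the $\ell_\infty$-norm with respect to each column individually.

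For the adversarial regret bound I would run $f$ parallel EG algorithms, one per column $\*A_{[:,k]}$ over $\triangle_{|\K|}$, each with step size $\sqrt{\log|\K| / (2 \gamma^2 D^2 T)}$. Applying the standard EG regret bound [Shalev-Shwartz 2011, Theorem~2.15] to each column yields $\gamma D \sqrt{2T \log |\K|}$, and summing across the $f$ columns produces the stated regret $\gamma D f \sqrt{2T \log |\K|}$. Note the factor-$n$ improvement over the previous theorem follows directly from sharing $\*A$ across timesteps rather than maintaining $n$ separate distributions. The statistical (first) result then follows by online-to-batch conversion via Lemma~\ref{lem:o2b}, using the uniform bound $U_{s_t} \le \gamma D$ for the Hoeffding term and setting $\*{\hat A} = \frac{1}{T}\sum_{t=1}^T \*A_t$; by Jensen's inequality applied to the convex loss, the average $\*{\hat A}$ inherits the regret-based suboptimality bound.

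The only step that requires any care, rather than being rote, is verifying that the $\ell_\infty$-Lipschitz constant per column does \emph{not} grow with $n$. This hinges on the max-over-$\gamma D$-windows (not sum-over-all-$n$-timesteps) structure of $U_s$, together with the fact that the feature entries $\*f_{t[j,l]}$ are bounded by $1$; otherwise the $n$-dependence would return through the subgradient norm and nullify the desired improvement. Once this observation is in place, the remainder of the argument is a direct application of mirror descent on a product of simplices followed by a textbook online-to-batch conversion.
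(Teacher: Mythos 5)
Your proposal is correct and matches the paper's proof essentially step for step: $f$ parallel EG updates (one per column of $\*A$ over $\triangle_{|\K|}$) with step size $\sqrt{\log|\K|/(2\gamma^2D^2T)}$, a per-column regret of $\gamma D\sqrt{2T\log|\K|}$ summed over the $f$ columns, and online-to-batch conversion via Lemma~\ref{lem:o2b} with the averaged iterate. Your explicit verification that the per-column subgradient is $\gamma D$-bounded in $\ell_\infty$ (and hence independent of $n$) is a detail the paper leaves implicit, and it correctly isolates where the factor-$n$ savings over the per-timestep version comes from.
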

\begin{proof}
	For the second result let $\*A_t$ be the sequence generated by running $f$ EG algorithms with step size $\sqrt{\frac{\log|\K|}{2\gamma^2D^2T}}$ on the losses $U_{s_t}(\*F\*A^T)$ over $\St^{|\K|\times f}$.
	Since these losses are $\gamma D$-Lipschitz and the maximum entropy over the simplex is $\log|\K|$, the regret guarantee follows from \citep[Theorem~2.15]{shalev-shwartz2011oco}.
	For the first result, apply standard online-to-batch conversion to the second result, i.e. draw $T=\Omega\left(\left(\frac{\gamma D}\varepsilon\right)^2\left(f^2\log|\K|+\log\frac1\delta\right)\right)$ samples $(\*s_t,\*f_t)$, run EG on the resulting losses $U_{s_t}(\*F_t\*A^T)$ as above, and set $\*{\hat A}=\frac1T\sum_{t=1}^T\*A_t$ to be the average of the resulting actions $\*A_t$.
	Applying Lemma~\ref{lem:o2b} yields the result.
\end{proof}

\newpage
\section{Faster graph algorithms with predictions}\label{sec:graph}
In this section we compare to the results of \citet{chen2022faster}, who analyze several prediction-based graph algorithms, including one with an improved prediction-dependent runtime for the matching approach of \citet{dinitz2021duals} and a prediction-dependent bound for single-source shortest path.
From the learnability perspective, they observe two important error metrics in the analysis of graph algorithms with predictions:
the $\ell_1$-metric of \citet{dinitz2021duals} measuring the $\ell_1$-norm between the prediction and a ground truth vector such as the dual and the $\ell_\infty$-metric measuring the $\ell_\infty$-norm between the same quantities.
In the first case their setting and results are equivalent to those of \citet{dinitz2021duals}, so we improve upon this by a factor of $\BigO(d)$, where $d$ is the dimension of the hint.

To analyze the $\ell_\infty$ case, we start by showing that---as in the $\ell_1$ case---we can round integer vectors with only a multiplicative factor loss:
\begin{claim}
	Given any vectors $\*x\in\Z^n$ and $\*y\in\R^n$, let $\*{\tilde y}\in\Z^n$ be the vector whose elements are those of $\*y$ rounded to the nearest integer.
	Then we have $\|\*x-\*{\tilde y}\|_\infty\le2\|\*x-\*y\|_\infty$.
\end{claim}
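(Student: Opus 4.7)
The plan is to essentially mimic the coordinate-wise argument from Claim~\ref{clm:round}, but observe that it is strong enough to give the bound coordinate-by-coordinate, so the $\ell_\infty$ version follows immediately by taking a maximum instead of a sum.

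Concretely, I would fix an arbitrary index $i \in [n]$ and split into two cases depending on whether rounding moves $\*y_{[i]}$ toward or away from the integer $\*x_{[i]}$. Let $S \subset [n]$ again be the set of $i$ for which $\*x_{[i]} \ge \*y_{[i]}$ and $\*{\tilde y}_{[i]} = \lceil \*y_{[i]} \rceil$, i.e.\ the indices where rounding moves $\*y_{[i]}$ toward $\*x_{[i]}$; here one has immediately $|\*x_{[i]} - \*{\tilde y}_{[i]}| \le |\*x_{[i]} - \*y_{[i]}|$. For $i \in [n] \setminus S$ the rounding moves away from $\*x_{[i]}$, but since $\*x_{[i]} \in \Z$ and rounding went the ``wrong way'' we must have $|\*x_{[i]} - \*y_{[i]}| \ge 1/2 \ge |\*y_{[i]} - \*{\tilde y}_{[i]}|$, so by the triangle inequality $|\*x_{[i]} - \*{\tilde y}_{[i]}| \le |\*x_{[i]} - \*y_{[i]}| + |\*y_{[i]} - \*{\tilde y}_{[i]}| \le 2|\*x_{[i]} - \*y_{[i]}|$.

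Combining the two cases yields the coordinate-wise bound $|\*x_{[i]} - \*{\tilde y}_{[i]}| \le 2|\*x_{[i]} - \*y_{[i]}|$ for every $i \in [n]$, and taking the max over $i$ gives
\[
\|\*x - \*{\tilde y}\|_\infty = \max_{i \in [n]} |\*x_{[i]} - \*{\tilde y}_{[i]}| \le 2 \max_{i \in [n]} |\*x_{[i]} - \*y_{[i]}| = 2 \|\*x - \*y\|_\infty.
\]

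There is no real obstacle here; the only subtlety is checking that the ``wrong-way'' case really does force $|\*x_{[i]} - \*y_{[i]}| \ge 1/2$, which is where integrality of $\*x$ is used. Unlike in the $\ell_1$ proof, no summing is needed, so the argument is essentially a trivial adaptation of the earlier one.
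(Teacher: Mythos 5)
Your proof is correct and follows essentially the same route as the paper's: the same split into the set $S$ of coordinates where rounding moves toward $\*x_{[i]}$ versus its complement, the same observation that the ``wrong-way'' case forces $|\*x_{[i]}-\*y_{[i]}|\ge1/2\ge|\*{\tilde y}_{[i]}-\*y_{[i]}|$, and the same triangle-inequality step. Making the bound coordinate-wise and then taking a max is exactly what the paper does, just phrased slightly more explicitly.
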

\begin{proof}
	Let $S\subset[n]$ be the set of indices $i\in[n]$ for which $\*x_{[i]}\ge\*y_{[i]}\iff\*{\tilde y}_{[i]}=\lceil\*y_{[i]}\rceil$.
	For $i\in[n]\backslash S$ we have $|\*x_{[i]}-\*y_{[i]}|\ge1/2\ge|\*{\tilde y}_{[i]}-\*y_{[i]}|$ so it follows by the triangle inequality that
	\begin{align*}
		\|\*x-\*{\tilde y}\|_\infty
		&=\max\left\{\max_{i\in S}|\*x_{[i]}-\*{\tilde y}_{[i]}|,\max_{i\in[n]\backslash S}|\*x_{[i]}-\*{\tilde y}_{[i]}|\right\}\\
		&\le\max\left\{\max_{i\in S}|\*x_{[i]}-\*y_{[i]}|,\max_{i\in[n]\backslash S}|\*x_{[i]}-\*y_{[i]}|+|\*y_{[i]}-\*{\tilde y}_{[i]}|\right\}\\
		&\le\max\left\{\max_{i\in S}|\*x_{[i]}-\*y_{[i]}|,2\max_{i\in[n]\backslash S}|\*x_{[i]}-\*y_{[i]}|\right\}
		\le2\|\*x-\*y\|_\infty
	\end{align*}
\end{proof}
We are thus able to also use online convex optimization in this setting and apply the rounded outputs to graph algorithms.
In particular, we can use regular OGD to improve upon the $\ell_\infty$-learnability result of \citet{chen2022faster} by a factor of $\BigO(d^2)$, where $d$ is the dimension of the prediction:
\begin{theorem}\label{thm:graph}
	Consider any graph algorithm with optimal $d$-dimensional $M$-bounded predictions $\*h(c)$ associated with every instance $c$.
	\begin{enumerate}[leftmargin=*,topsep=-1pt,noitemsep]\setlength\itemsep{2pt}
		\item There exists a poly-time algorithm s.t. for any $\delta,\varepsilon>0$ and distribution $\D$ over instances it takes $\BigO\left(\left(\frac M\varepsilon\right)^2\left(d+\log\frac1\delta\right)\right)$ samples from $\D$ and returns $\*{\hat h}\in\R^d$ s.t. w.p. $\ge1-\delta$
		$$\E_{c\sim\D}\|\*{\hat h}-\*h(c)\|_\infty
		\le\min_{\|\*h\|_\infty\le M}\E_{c\sim\D}\|\*h-\*h(c)\|_\infty+\varepsilon$$
		\item Let $c_1,\dots,c_T$ be an adversarial sequence of instances.
		Then OGD with appropriate step-size achieves regret
		$$
		\max_{\|\*h\|_\infty\le M}\sum_{t=1}^T\|\*h_t-\*h(c_t)\|_\infty-\|\*h-\*h(c_t)\|_\infty
		\le M\sqrt{2dT}$$
	\end{enumerate}
\end{theorem}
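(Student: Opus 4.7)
The plan is to follow essentially the same template as the proofs of Theorems~\ref{thm:matching} and~\ref{thm:bmatching}, since the loss $U_t(\*h) = \|\*h - \*h(c_t)\|_\infty$ is already convex in $\*h$ and thus directly amenable to projected online gradient descent over the box domain $[-M,M]^d$. I would first establish the adversarial regret bound (part 2), and then derive the sample complexity bound (part 1) via standard online-to-batch conversion using Lemma~\ref{lem:o2b}. Note that unlike in the $\ell_1$ case, no massaging of the cost function is needed: the statement already phrases prediction quality in terms of a convex distance to the ground truth $\*h(c)$.

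For part 2, the two quantities to pin down are the Lipschitz constant of $U_t$ with respect to the Euclidean norm and the Euclidean radius of the feasible set. Since $|\|\*x\|_\infty - \|\*y\|_\infty| \le \|\*x-\*y\|_\infty \le \|\*x-\*y\|_2$, the map $\*h \mapsto \|\*h-\*h(c_t)\|_\infty$ is $1$-Lipschitz in Euclidean norm, so its subgradients have Euclidean norm at most $1$. The box $[-M,M]^d$ has Euclidean radius $M\sqrt{d}$. Plugging these into \citet[Corollary~2.7]{shalev-shwartz2011oco} with step-size $M\sqrt{d/(2T)}$ then yields the advertised regret bound of $M\sqrt{2dT}$.

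For part 1, I would apply Lemma~\ref{lem:o2b} to the regret bound from part 2, using the triangle-inequality observation that $U_t(\*h) \le 2M$ on the feasible set so the losses are bounded. Solving $2R_{T_\varepsilon} \le \varepsilon T_\varepsilon$ with $R_T = M\sqrt{2dT}$ gives $T_\varepsilon = \Omega(M^2 d/\varepsilon^2)$, and combining with the $B^2\log(1/\delta)/\varepsilon^2 = 4M^2\log(1/\delta)/\varepsilon^2$ term of the lemma yields the advertised $\BigO((M/\varepsilon)^2 (d + \log(1/\delta)))$ sample complexity. The returned predictor is $\*{\hat h} = \frac{1}{T}\sum_{t=1}^T \*h_t$, the average of the OGD iterates on a fresh i.i.d.\ sample of size $T$.

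There is essentially no obstacle here: the hard part of our framework---identifying a convenient-to-optimize upper bound---is built into the theorem's hypothesis, in contrast to Section~\ref{sec:migration} where the analogous step required nontrivial probabilistic work. The improvement over \citet{chen2022faster} by a $\BigO(d^2)$ factor comes entirely from exploiting the convexity of $\|\cdot\|_\infty$ via OGD instead of a more generic PAC argument over a discretized hypothesis class, and, for translating the real-valued $\*{\hat h}$ back to integer predictions needed by the algorithms of \citet{chen2022faster}, the preceding rounding claim absorbs only a factor of $2$.
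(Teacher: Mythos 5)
Your proposal is correct and matches the paper's argument, which simply invokes the last two parts of Theorem~\ref{thm:scheduling} in the special case $f=1$: OGD on the $1$-Lipschitz convex losses $\|\*h-\*h(c_t)\|_\infty$ over the box of Euclidean radius $M\sqrt d$, followed by online-to-batch conversion via Lemma~\ref{lem:o2b}. Your constants (Lipschitz constant $1$, radius $M\sqrt d$, loss bound $2M$) all check out and yield the stated regret and sample complexity.
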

\begin{proof}
	The proof is the same as for the last two results of Theorem~\ref{thm:scheduling} in the special case $f=1$.
\end{proof}

\newpage
\section{Permutation predictions for non-clairvoyant scheduling}\label{sec:permutation}

Finally, we discuss the the applicability of our framework to the results in \citet{lindermayr2022permutation}, who study how to prioritize among $n$ jobs by predicting the best permutation of them under weights $\*w\in\R_{\ge0}^n$ and processing requirements $\*p\in\R_{\ge0}^n$ that are only known after completion.
Ignoring robustness-consistency tradeoffs and terms that do not depend on the prediction, they show that in several settings the competitive ratio depends linearly on the following error of an $n\times n$ permutation matrix $\*X$: 
$$
U_{\*w,\*p}(\*X)
=\Tr(\*X\*w((\*U\odot\*X)\*p)^T)
=\Tr((\*U\odot\*X)^T\*X\*w\*p^T)
$$
where $\*U\in\{0,1\}^{n\times n}$ is upper triangular.
The above expression is derived from the third equation in the proof of Theorem~4.1 of \citet{lindermayr2022permutation} for the case of $z=1$ sample;
we construct the matrix form to reason about its online learnability.

Naively, a sequence of bounded functions of permutations is {\em computationally inefficiently} learnable by using randomized EG over the $n!$ experts corresponding to each permutation:
\begin{theorem}\label{thm:permutation}
	Consider the setting of \citet{lindermayr2022permutation} with $n$ jobs with $W$-bounded weights and $P$-bounded processing times.
	Let $\Pe^{n\times n}$ be the set of $n\times n$ permutation matrices.
	\begin{enumerate}[leftmargin=*,topsep=-1pt,noitemsep]\setlength\itemsep{2pt}
		\item There exists an algorithm that s.t. for any $\delta,\varepsilon>0$ and distribution $\D$ over weights and processing requirements it takes $\BigO\left(\left(\frac{WPn}\varepsilon\right)^2\left(n\log n+\log\frac1\delta\right)\right)$ samples from $\D$ and returns a discrete distribution $\*{\hat x}\in\triangle_{n!}$ over $\Pe^{n\times n}$ such that
		$$
		\E_{\*X\sim\*{\hat x}}\E_{(\*w,\*p)\sim\D}U_{\*w,\*p}(\*X)
		\le\min_{\*X\in\Pe^{n\times n}}\E_{(\*w,\*p)\sim\D}U_{\*w,\*p}(\*X)+\varepsilon
		$$
		\item Let $(\*w_1,\*p_1),\dots,(\*w_T,\*p_T)$ be an adversarial sequence of job (weight, processing requirement) pairs.
		Then running EG with appropriate step-size over $\triangle_{|\Pe^{n\times n}|}$ has regret
		$$
		\E\max_{\*X\in\Pe^{n\times n}}\sum_{t=1}^TU_{\*w_t,\*p_t}(\*X_t)-U_{\*w_t,\*p_t}(\*X)
		\le WPn\sqrt{2n T\log n}
		$$
		where the expectation is over the randomness of the algorithm.
	\end{enumerate}
\end{theorem}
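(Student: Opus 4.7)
The plan is to follow the two-step framework of the paper: use the given upper bound $U_{\*w,\*p}(\*X)$ directly as a learnable loss, apply exponentiated gradient (EG) over the $n!$-dimensional simplex of distributions over permutations, and derive the statistical result by online-to-batch conversion. Since the theorem statement explicitly notes the randomized learner returns a distribution over $\Pe^{n\times n}$, we are free to treat each permutation as one expert and linearly extend $U_{\*w,\*p}$ to the simplex, which sidesteps any non-convexity of the original loss over the discrete set of permutation matrices.

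The first substantive step is to bound the per-round loss to obtain a Lipschitz bound for EG. For any permutation matrix $\*X$, the vector $\*X\*w$ is a permutation of $\*w$ and so its entries are bounded by $W$. The matrix $\*U\odot\*X$ is the upper-triangular part of $\*X$, which is still a 0-1 matrix with at most one 1 per row, so $(\*U\odot\*X)\*p$ has entries bounded by $P$. Thus
\[
U_{\*w,\*p}(\*X)=\Tr(\*X\*w((\*U\odot\*X)\*p)^T)=(\*X\*w)^T(\*U\odot\*X)\*p\le nWP,
\]
and the linear extension to $\triangle_{n!}$ inherits the same bound. The losses are therefore $nWP$-bounded linear functions on $\triangle_{n!}$.

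Next, I would instantiate EG on $\triangle_{n!}$ with step-size $\sqrt{\log(n!)/(2(nWP)^2 T)}$, treating the $k$-th coordinate as the probability of sampling the $k$-th permutation. By the standard EG regret guarantee~\citep[Theorem~2.15]{shalev-shwartz2011oco}, the expected regret against the best fixed permutation is at most $nWP\sqrt{2T\log(n!)}$. Using Stirling's bound $\log(n!)\le n\log n$ yields the claimed $WPn\sqrt{2nT\log n}$ regret of the second part. For the first part, apply online-to-batch conversion (Lemma~\ref{lem:o2b}) to the i.i.d. sequence: draw $T=\Omega\left((WPn/\varepsilon)^2(n\log n+\log(1/\delta))\right)$ samples, run EG on the induced losses, and return $\*{\hat x}=\tfrac1T\sum_{t=1}^T\*x_t$; the expected loss of a draw from $\*{\hat x}$ is then $\varepsilon$-suboptimal with probability $\ge1-\delta$.

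The main obstacle is not really mathematical but conceptual: the ``algorithm'' here maintains an $n!$-dimensional distribution, which is computationally intractable, and indeed the first part of the statement says only ``algorithm'' rather than ``poly-time algorithm'' as elsewhere in the paper. The only mild subtlety is the distinction between evaluating $U_{\*w,\*p}$ at a permutation matrix versus its linear extension to the simplex; since $U_{\*w,\*p}$ appears linearly in the first argument $\*X$ of $\Tr(\*X\*w v^T)$ once we regard $v=(\*U\odot\*X)\*p$ as being determined by the sampled $\*X$, the expected loss under $\*x_t\in\triangle_{n!}$ coincides with the linear extension we optimize, so the EG machinery applies verbatim.
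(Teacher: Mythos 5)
Your proposal matches the paper's proof essentially exactly: randomized EG over the $n!$ experts corresponding to permutation matrices, the $WPn$ bound on the losses, the $\log(n!)\le n\log n$ simplification, and online-to-batch conversion via Lemma~\ref{lem:o2b} for the statistical claim. The only cosmetic difference is that the paper invokes the expert-advice corollary \citep[Corollary~2.14]{shalev-shwartz2011oco} rather than the EG regret theorem, but the bound and the argument are the same.
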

\begin{proof}
	For the second result let $\*x_t\in\triangle_{n!}$ be the sequence generated by running EG with step-size $\sqrt{\frac{\log(n!)}{2T}}$ over the $n!$ experts corresponding to each element of $\Pe^{n\times n}$.
	Then the sequence of permutation matrices $\*X_t\sim\*x_t$ sampled from this distribution satisfies the guarantee on the expected regret~\cite[Corollary~2.14]{shalev-shwartz2011oco} since $\log(n!)\le n\log n$ and $U_{\*w,\*p}$ is $WPn$-bounded.
	The first result follows by applying online-to-batch conversion to this sequence, i.e. we draw $T=\Omega\left(\left(\frac{WPn}\varepsilon\right)^2\left(n\log n+\log\frac1\delta\right)\right)$ samples $(\*w_t,\*p_t)$, run randomized EG as above, and set $\*{\hat x}=\frac1T\sum_{t=1}^T\*x_t$ to be the average of the resulting distributions $\*x_t$. 
	Applying Lemma~\ref{lem:o2b} yields the result.
\end{proof}

The sample complexity guarantee resulting from online-to-batch conversion matches that of \citet{lindermayr2022permutation}, except that the output is a distribution over permutation matrices so the error is in expectation over that distribution.
However, randomized EG is incredibly inefficient due to the need to store and sample from a distribution over $n!$ variables.
Another way of learning over permutation matrices is to run an online learning algorithm over the set of doubly stochastic matrices~\cite{helmbold2009permutation}.
When the losses are linear functions of the permutation matrices this is yields efficient low-regret algorithms because each doubly stochastic matrix corresponds to a {\em small} convex combination of permutation matrices, i.e. a distribution from which one can sample an action.
However, the losses $U_{\*w,\*p}$ are nonlinear and so a different approach is needed.

\end{document}